\definecolor{Gray}{gray}{0.9}
\def\eqref#1{equation~\ref{#1}}
\def\1{\bm{1}}
\def\rmA{{\mathbf{A}}}
\def\rmI{{\mathbf{I}}}
\def\vn{{\bm{n}}}
\def\vv{{\bm{v}}}
\def\vx{{\bm{x}}}
\def\vy{{\bm{y}}}
\def\vz{{\bm{z}}}
\DeclareMathAlphabet{\mathsfit}{\encodingdefault}{\sfdefault}{m}{sl}
\SetMathAlphabet{\mathsfit}{bold}{\encodingdefault}{\sfdefault}{bx}{n}
\DeclareMathOperator*{\argmin}{arg\,min}
\newcommand{\x}{{\boldsymbol x}}
\newcommand{\z}{{\boldsymbol z}}
\newcommand{\epsilonb}{{\boldsymbol \epsilon}}
\definecolor{C0}{rgb}{0.121569, 0.466667, 0.705882}
\definecolor{C1}{rgb}{1.000000, 0.498039, 0.054902}
\definecolor{C2}{rgb}{0.172549, 0.627451, 0.172549}
\definecolor{C3}{rgb}{0.839216, 0.152941, 0.156863}
\definecolor{C4}{rgb}{0.580392, 0.403922, 0.741176}
\definecolor{C5}{rgb}{0.549020, 0.337255, 0.294118}
\definecolor{C6}{rgb}{0.890196, 0.466667, 0.760784}
\definecolor{C7}{rgb}{0.498039, 0.498039, 0.498039}
\definecolor{C8}{rgb}{0.737255, 0.741176, 0.133333}
\definecolor{C9}{rgb}{0.090196, 0.745098, 0.811765}
\definecolor{trolleygrey}{rgb}{0.5, 0.5, 0.5}
\definecolor{BrickRed}{rgb}{0.6,0,0}
\definecolor{RoyalBlue}{rgb}{0,0,0.8}
\definecolor{Tdgreen}{rgb}{0,0.4,0.7}
\definecolor{pinegreen}{rgb}{0.0, 0.47, 0.44}
\definecolor{cornellred}{rgb}{0.7, 0.11, 0.11}
\definecolor{cadmiumgreen}{rgb}{0.0, 0.42, 0.24}
\definecolor{spirodiscoball}{rgb}{0.06, 0.75, 0.99}
\definecolor{mylightblue}{rgb}{0.85, 0.90, 0.94}
\definecolor{maroon}{cmyk}{0,0.87,0.68,0.32}
\definecolor{cfg}{rgb}{0.906, 0.435, 0.318}
\definecolor{cfgpp}{rgb}{0.165, 0.616, 0.561}
\definecolor{cfgnull}{rgb}{0.208, 0.565, 0.953}
\definecolor{iccvblue}{rgb}{0.21,0.49,0.74}
\def\rmA{{\boldsymbol{A}}}
\def\rmI{{\boldsymbol{I}}}
\def\eqref#1{Eq.~(\ref{#1})}
\title{FlowDPS~:~Flow-Driven Posterior Sampling for Inverse Problems}
\author{
Jeongsol Kim$^{* \, 1}$ \quad Bryan Sangwoo Kim$^{* \, 2}$ \quad Jong Chul Ye$^2$ \\
$^1$Department of Bio and Brain Engineering, KAIST \\
$^2$Kim Jaechul Graduate School of AI, KAIST \\
{\tt\small \{jeongsol, bryanswkim, jong.ye\}@kaist.ac.kr} \\
$^*$ \small Equal contribution}
\begin{document}

\twocolumn[{%
\renewcommand\twocolumn[1][]{#1}%
\maketitle
\vspace{-10mm}
\begin{center}
    \centering
    \captionsetup{type=figure}
    \includegraphics[width=0.93\linewidth]{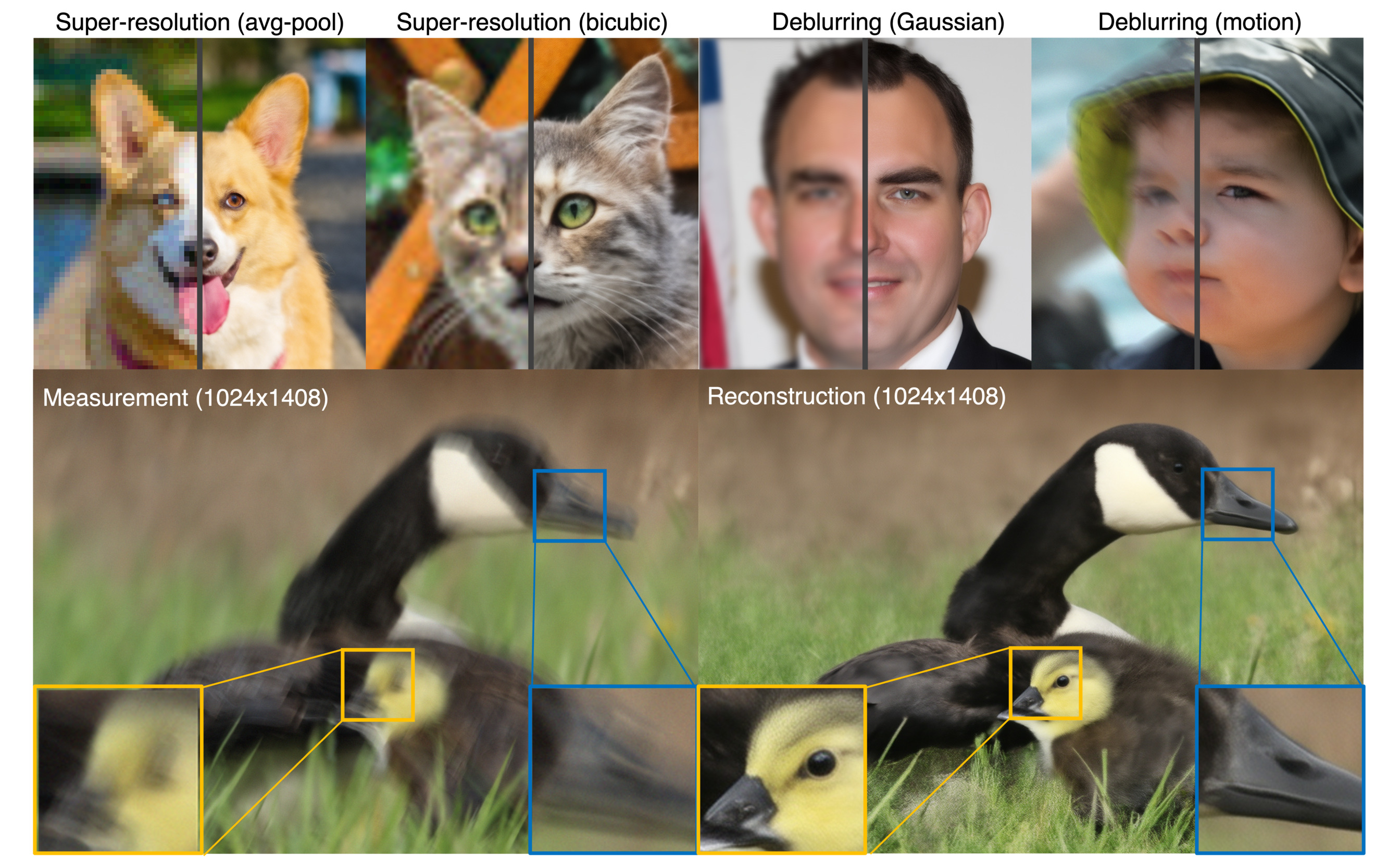}
    \vspace*{-0.3cm}
\caption{{(Top) Super-resolution and deblurring results for linear inverse problems before and after FlowDPS. (Bottom) Results for motion deblurring on a high-resolution image}.}
\label{fig:main}
\end{center}}]

\begin{abstract}
Flow matching is a recent state-of-the-art framework for generative modeling based on ordinary differential equations (ODEs). While closely related to diffusion models, it provides a more general perspective on generative modeling.
Although inverse problem solving has been extensively explored using diffusion models, it has not been rigorously examined within the broader context of flow models. 
Therefore,  here we  extend the diffusion inverse solvers (DIS) --- which perform posterior sampling by combining a denoising diffusion prior with an  likelihood gradient --- into the flow framework. Specifically, by driving the flow-version of Tweedie's formula,
we decompose the flow ODE into two components: one for clean image estimation and the other for noise estimation.
By integrating the likelihood gradient and stochastic noise into each component, respectively, we demonstrate that posterior sampling for inverse problem solving can be effectively achieved using flows. Our proposed solver, Flow-Driven Posterior Sampling (FlowDPS), can also be seamlessly integrated into a latent flow model with a transformer architecture. Across four linear inverse problems, we confirm that FlowDPS outperforms state-of-the-art alternatives, all without requiring additional training.
Code : \url{https://github.com/FlowDPS-Inverse/FlowDPS}
\end{abstract}
\vspace{-5mm}

\section{Introduction}
The goal of  inverse problems is to recover the true signal $\vx_0$ from measurements $\vy$,
 which are generated through a forward model. This forward model typically represents a physical, computational, or statistical process that maps the true signal to the measurement space, often introducing noise, distortion, or information loss. 

For example, a typical linear inverse problem is defined as finding $\vx_0 \in \mathbb{R}^d$ from given measurement $ \vy$:
\begin{align}
    \vy = \rmA\vx_0 + \vn \in \mathbb{R}^{m} \ ,
    \label{eqn:ip}
\end{align}
where $\rmA \in \mathbb{R}^{m\times d}$ represents the forward model (or imaging system) and $\vn \in \mathbb{R}^{m} \sim \mathcal{N}(0, \sigma_n^2\rmI_m)$ denotes Gaussian noise with variance $\sigma_n$.
Because inverse problems are inherently ill-posed, there are  multiple solutions $\vx$ that satisfy \eqref{eqn:ip}. The key challenge, therefore, is to constrain the solution space to achieve a unique and meaningful reconstruction.

Traditionally, this has been done by adding a regularization term in a Maximum A Posteriori (MAP) framework \citep{boyd2011distributed}. More recently, advancements in diffusion models have introduced techniques for guiding the sampling trajectory to perform posterior sampling \cite{chung2022improving, chung2023diffusion, wang2023zeroshot, song2023pseudoinverseguided}. These diffusion model-based approaches aim to select the solution $\vx_0$ from the image distribution $q(\vx_0)$ (learned by the generative model) that is closest to the subspace defined by $\vy - \rmA \vx_0 = 0$. Such innovative methods have quickly become the state-of-the-art in the field, producing superior results compared to traditional approaches \cite{chung2022improving, chung2023diffusion, wang2023zeroshot, song2023pseudoinverseguided}.

Meanwhile, Latent Diffusion Models (LDMs) \cite{rombach2022high, patil2022stable} have emerged as foundational generative models trained on large-scale multi-modal datasets \cite{schuhmann2022laion}. By virtue of the flexibility and memory efficiency of the latent space, they can generate diverse content, including images, videos, audio, and multi-modal outputs simultaneously.
Building on the success of LDMs, recent zero-shot inverse problem solvers aim to utilize their superior generative prior to tackle inverse problems (\textit{e.g.}, PSLD \cite{rout2024solving}, ReSample \cite{song2024solving}, DAPS \cite{zhang2024improving}).
To further constrain the solution space, P2L \cite{chung2023prompt} leverages null-text embedding as a learnable parameter, and TReg \cite{kim2023regularization} introduces the concept of text-regularization for latent inverse problem solvers by using textual descriptions as guidance. Building on such ideas, DreamSampler \cite{kim2024dreamsampler} formulates an optimization problem in the latent space and proposes a text-guided image restoration algorithm. Such LDM-based solvers have proven effective in constraining the solution space even for inverse problems at higher scales.

Despite such notable progress in LDM-based inverse problem solvers, they often prove less effective when extended to the current dominant trend---flow-based models \cite{lipman2023flow, liu2023flow}. Flow-based models offer a robust foundation, and their integration into scalable architectures like the Diffusion Transformer has led to powerful generative capabilities \cite{peebles2023scalable, esser2024scaling}. Yet, 
{only a few flow-based inverse problem solvers have been proposed \cite{patel2024steering, martin2025pnpflow}. For example, FlowChef~\cite{patel2024steering} guides the velocity field using the gradient of the conditioning loss with respect to clean estimates computed at each point of the flow ODE under the assumption of local linear vector field and locally constant Jacobian. PnP-Flow~\cite{martin2025pnpflow} uses a pre-trained flow model as the denoiser for a plug-and-play restoration algorithm. However such existing methods do not provide enough insight into how they are related to inverse problem solving using posterior sampling.}

Therefore, one of the key contributions of this paper is providing a comprehensive guide to posterior sampling within the flow-based framework, addressing its subtleties and challenges. Building on these insights, we propose a novel approach that achieves unmatched reconstruction speed and quality, even for high-resolution images with severe degradations.
Specifically, we introduce {\em Flow-Driven Posterior Sampling (FlowDPS)}, a novel flow-based inverse problem solver derived from the decomposition of the flow ODE. 
More specifically, by deriving the flow-version of Tweedie's formula,  we demonstrate that the flow ODE can be broken down into two components---one for clean image estimation and another for noise estimation---and that manipulating both components significantly enhances reconstruction quality. 
Under the decomposition, we can easily see that
posterior sampling with data consistency is incorporated into the estimated clean image by integrating the likelihood gradient, 
while generative quality is maintained by adding stochastic noise to the noise component.
Consequently, we can easily show that the geometric structure of flow ODE is indeed similar to that of the diffusion model.
Furthermore,  we show that FlowDPS can be easily integrated into the state-of-the art latent flow models.

We validate FlowDPS on various linear inverse problems using the widely adopted flow-based Stable Diffusion 3.0 model \cite{esser2024scaling}.
Extensive experiments confirm that our method outperforms any existing LDM-based or flow-based solver. Overall, FlowDPS enables state-of-the-art reconstruction of high-resolution images for various inverse problems.

\section{Background}

\paragraph{Flow-based Models and Flow Matching}
Suppose that we have access to samples of source distribution $p$ and target distribution $q$ over $\mathbb{R}^d$.
The goal of a flow-based model is to generate $X_0 \sim q$, starting from $X_1 \sim p$.
Specifically, we define a time-dependent flow $\psi: (\vx, t) \rightarrow \psi_t(\vx)$ such that $\psi_t(X_1):=X_t\sim p_t$, where $p_t(\vx)$ with $0 \leq t \leq 1$ denotes a probability path with boundary conditions $p_1=p$ and $p_0=q$\footnote{While conventional notation uses $p_0=p$ and $p_1=q$, we swap the time index to align with the implementation of the flow-based model.}.
The flow can be uniquely defined by a flow ODE with velocity field $v_t$: 
\begin{align}
    \frac{d\psi_t(\vx)}{dt}=v_t(\psi_t(\vx)) \label{eqn:flow},  &\quad
   \mbox{where}\quad \psi_1(\vx)=\vx
\end{align}
Using the change of variable $\vx=\psi_t^{-1}(\vx')$, one can compute the corresponding velocity field $v_t(\vx')$ by 
\begin{align}\label{eq:v}
v_t(\vx')=v_t(\psi_t(\vx)) = \frac{d}{dt}\psi_t(\vx)=\dot\psi_t(\psi_t^{-1}(\vx')),
\end{align}
where $\dot\psi_t = d\psi_t/dt$.
When we train the flow model as a generative model, 
$v_t$ is learned using neural network parameterized by $\theta$   through flow matching loss \cite{lipman2023flow}:
\begin{align}
    \min_\theta \mathbb{E}_{t, \vx_t \sim p_t} \|v_t(\vx_t)-v_t^{\theta}(\vx_t)\|^2.
    \label{eqn:fm}
\end{align}
Unfortunately, a key problem to flow matching is that we cannot access $v_t(\vx_t)$ due to intractable integration over all $\vx_0$.
To address this, \cite{lipman2023flow} proposes conditional flow $\psi_t(\vx|\vx_0)$ that generate the probability 
$p_t(\vx|\vx_0)$.
Using \eqref{eq:v}, the conditional velocity field can be computed as
\begin{align}\label{eq:condv}
    v_t(\vx|\vx_0) = 
    \dot\psi_t(\psi_t^{-1}(\vx|\vx_0)|\vx_0),
\end{align}
and the parameterized velocity field is learned using the conditional flow matching loss,
\begin{align}
    \min_\theta \mathbb{E}_{t, \vx\sim p_{t|0}(\vx_t|\vx_0)} \|v_t(\vx_t|\vx_0)-v_t^{\theta}(\vx_t)\|^2,
    \label{eqn:cfm}
\end{align}
whose gradient with respect to $\theta$ is shown to match that of \eqref{eqn:fm} \cite{lipman2023flow} .
%

\paragraph{Affine conditional flows}
In practice, the flow model with affine coupling is one of the most widely used flow framework.
Specifically, 
let the source and target distributions form an independent coupling $\pi_{0,1}(\vx_0, \vx_1)=q(\vx_0)p(\vx_1)$ where $p(\vx_1)=\mathcal{N}(0,\rmI)$.
Now consider affine conditional flows:
\begin{align}
    \psi_t(\vx|\vx_0)=a_t\vx_0 + b_t\vx
    \label{eqn:affineflow}
\end{align}
with boundary conditions $a_0=1,b_0=0$ and $a_1=0,b_1=1$ such that
 $\psi_0(\vx|\vx_0)=\vx_0$ and $\psi_1(\vx|\vx_0)=\vx$.  
The forward path given condition $\vx_0$  is then given by
\begin{align}
    \vx_t = \psi_t(\vx_1|\vx_0)=a_t\vx_0+b_t \vx_1
    \label{eqn:forward}
\end{align}
Using \eqref{eq:condv} and \eqref{eqn:forward}, the conditional velocity at $\vx_t$ can be obtained by
\begin{align}
    v_t(\vx_t|\vx_0) =  \dot\psi_t(\vx_1|\vx_0)=\dot a_t \vx_0 + \dot b_t \vx_1. 
    \label{eqn:margin_v}
\end{align}
For the case of linear conditional flow, {where $a_t=1-t$ and $b_t=t$,} the conditional velocity is then given by
$v_t(\vx_t|\vx_0) = \dot\psi_t(\vx_t|\vx_0)= \vx_1-\vx_0$,
which leads to the popular form of the conditional flow matching loss:
\begin{align}
    \min_\theta \mathbb{E}_{t, \vx\sim p_{t|0}}\|v_t^{\theta}(\vx_t)-(\vx_1-\vx_0)\|^2,
    \label{eqn:cfm2}
\end{align}
where $\vx_t = (1-t)\vx_0+t\vx_1$. 
It is important to note that, despite utilizing conditional flow matching, the parameterized flow velocity estimate 
$v_t^{\theta}(\x_t)$ obtained through a neural network approximates the {\em marginal} velocity 
$v_t(\x_t)$ rather than the conditional velocity 
$v_t(\x_t|\x_0)$.
%

%
\section{Main Contribution} 
\subsection{Decomposition of Flow ODE}

Mathematically, the marginal velocity field $v_t(\vx)$ can be computed by \cite{lipman2023flow, liu2023flow}
\begin{align}
    v_t(\vx) 
    &=\mathbb{E}[v_t(\vx_t|\vx_0)|\vx_t=\vx] 
    = \mathbb{E}[\dot \psi_t (\vx_t|\vx_0) | \vx_t= \vx]  \notag\\
    &= \dot a_t\mathbb{E}[\vx_0|\vx_t=\x] + \dot b_t \mathbb{E}[\vx_1|\vx_t=\x] , 
    \label{eqn:margin_v_exp}
\end{align}
where we use $\dot \psi_t (\vx_t|\vx_0)=\dot a_t\vx_0+\dot b_t \vx_1$.
Here $\mathbb{E}[\vx_0|\vx_t]$ and $\mathbb{E}[\vx_1|\vx_t]$ corresponds to 
the denoised and noisy estimate for given $\vx_t$, respectively, which can
be computed using  the following flow-version of Tweedie formula whose proof can be found in Appendix.
\begin{restatable}[Tweedie Formula]{prop}{cmean}\label{prop:tweedie}
The denoised and noisy estimate given $\x_t$ are given by
\begin{align*}
\mathbb{E}[\vx_0|\vx_t] &= \left[a_t- \dot a_t\frac{ b_t }{\dot b_t}\right]^{-1}\left(\vx_t - \frac{b_t}{\dot b_t}v_t(\vx_t)\right)\\
\mathbb{E}[\vx_1|\vx_t] &= \left[b_t- \dot b_t\frac{a_t }{\dot a_t}\right]^{-1}\left(\vx_t - \frac{a_t}{\dot a_t}v_t(\vx_t)\right)
\end{align*}
\end{restatable}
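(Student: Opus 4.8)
The plan is to reduce the proposition to solving a $2\times 2$ linear system whose two unknowns are the conditional expectations $\mathbb{E}[\vx_0|\vx_t]$ and $\mathbb{E}[\vx_1|\vx_t]$. One equation is purely algebraic: taking $\mathbb{E}[\,\cdot\mid \vx_t\,]$ of the interpolation identity $\vx_t = a_t\vx_0 + b_t\vx_1$ and using linearity together with the fact that $\vx_t$ is measurable with respect to the conditioning gives $\vx_t = a_t\,\mathbb{E}[\vx_0|\vx_t] + b_t\,\mathbb{E}[\vx_1|\vx_t]$. The second equation is exactly the marginal-velocity decomposition already established in \eqref{eqn:margin_v_exp}, i.e.\ $v_t(\vx_t) = \dot a_t\,\mathbb{E}[\vx_0|\vx_t] + \dot b_t\,\mathbb{E}[\vx_1|\vx_t]$.

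First I would write these two relations as the system
\begin{align*}
\begin{bmatrix} a_t & b_t \\ \dot a_t & \dot b_t \end{bmatrix}
\begin{bmatrix} \mathbb{E}[\vx_0|\vx_t] \\ \mathbb{E}[\vx_1|\vx_t] \end{bmatrix}
= \begin{bmatrix} \vx_t \\ v_t(\vx_t) \end{bmatrix},
\end{align*}
and then solve it by Cramer's rule (equivalently, eliminate $\mathbb{E}[\vx_1|\vx_t]$ by taking $\dot b_t\times(\text{row }1) - b_t\times(\text{row }2)$, and symmetrically for the other unknown). Writing $W_t := a_t\dot b_t - \dot a_t b_t$ for the determinant, this yields
\begin{align*}
\mathbb{E}[\vx_0|\vx_t] = \frac{\dot b_t\,\vx_t - b_t\,v_t(\vx_t)}{W_t}, \qquad
\mathbb{E}[\vx_1|\vx_t] = \frac{a_t\,v_t(\vx_t) - \dot a_t\,\vx_t}{W_t}.
\end{align*}
To match the displayed form I would factor $\dot b_t$ from numerator and denominator of the first expression, using $W_t = \dot b_t\bigl(a_t - \dot a_t b_t/\dot b_t\bigr)$, and similarly factor $\dot a_t$ from the second, using $W_t = -\dot a_t\bigl(b_t - \dot b_t a_t/\dot a_t\bigr)$. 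This recovers precisely the two formulas in the statement.

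The only genuine obstacle is well-posedness: one must know the system is invertible, i.e.\ $W_t = a_t\dot b_t - \dot a_t b_t \neq 0$, and that the divisions by $\dot a_t$ and $\dot b_t$ used to reach the displayed form are legitimate. The quantity $W_t$ is the Wronskian of the pair $(a_t, b_t)$; since $\tfrac{d}{dt}(a_t/b_t) = -W_t/b_t^2$, its nonvanishing is equivalent to strict monotonicity of the signal-to-noise ratio $a_t/b_t$, which is the natural standing assumption on an affine conditional flow and holds in particular for the linear schedule $a_t = 1-t$, $b_t = t$, where $W_t \equiv 1$, $\dot a_t = -1$, $\dot b_t = 1$. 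I would state this monotonicity as a hypothesis for $t\in(0,1)$ and note that the endpoints $t\in\{0,1\}$ are trivial from the boundary conditions $a_0 = b_1 = 1$ and $a_1 = b_0 = 0$.
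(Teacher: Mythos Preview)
Your proposal is correct and essentially the same as the paper's proof: both combine the interpolation identity $\vx_t = a_t\vx_0 + b_t\vx_1$ with the conditional-velocity relation $v_t(\vx_t|\vx_0) = \dot a_t\vx_0 + \dot b_t\vx_1$ and its expectation $v_t(\vx_t) = \mathbb{E}[v_t(\vx_t|\vx_0)\mid\vx_t]$, then eliminate one unknown to solve for the other---you organize this as a $2\times 2$ system solved by Cramer's rule, while the paper performs the substitution directly before taking expectations, which is the same algebra in a different order. Your explicit discussion of the Wronskian condition $a_t\dot b_t - \dot a_t b_t \neq 0$ and the divisions by $\dot a_t,\dot b_t$ is a worthwhile addition that the paper leaves implicit.
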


This decomposition of the velocity through two Tweedie estimates leads to the decomposition of the flow ODE.
Specifically,   the flow ODE in \eqref{eqn:flow}  can be solved using the Euler method:
\begin{align}
    \vx_{t+dt} &= \vx_t + v_t(\vx_t)dt      \label{eqn:euler_step} \\
    &= C_1(t) \hat \vx_{0|t} + C_2(t)\hat \vx_{1|t}    \label{eqn:decompose}
\end{align}
where $dt <0$ denotes step size, $C_1(t)=a_t+\dot a_t dt$, $C_2(t)=b_t+\dot b_tdt$, $\hat \vx_{0|t}:=\mathbb{E}[\vx_0|\vx_t]$ and $\hat\vx_{1|t}:= \mathbb{E}[\vx_1|\vx_t]$. 
Note that the second term in \eqref{eqn:decompose} corresponds to deterministic noise in the literature of DDIM sampling.
Inspired by the merits of stochastic noise in the reverse sampling process, we propose to mix stochastic noise as
\begin{align}
    \vx_{t+dt}=C_1(t)\hat\vx_{0|t}+C_2(t)\tilde\vx_{1|t},
    \label{eqn:stochastic}
\end{align}
where 
\begin{align}\label{eq:mixs}
\tilde\vx_{1|t}=\sqrt{1-\eta_t}\hat\vx_{1|t} + \sqrt{\eta_t}\epsilonb,\quad \epsilonb \sim \mathcal{N}(0, \rmI ).
\end{align}
As shown in Section~\ref{sec:ddim}, this leads to 
    \begin{align}
        \vx_{t+dt}
        &=C_1(t)\hat\vx_{0|t} + \sqrt{C_2(t)^2-k_t^2}\hat\vx_{1|t} + \sqrt{k_t^2}\epsilonb
    \end{align}
    which is the same form with generalized version of DDIM \cite{song2020denoising} with  coefficients $C_1(t)=a_t+\dot a_tdt=a_{t+dt}$ and $C_2(t)=b_t+\dot b_tdt=b_{t+dt}$.

\subsection{Posterior Sampling via Flow Models}
\label{sec:posterior}
\noindent\textbf{Posterior flow velocity.}
Now we derive the flow velocity for a given posterior distribution.
Using the Tweedie formula in Proposition~\ref{prop:tweedie}, we get
\begin{align}
    v_t(\vx_t) 
    &= \frac{\dot a_t}{a_t}\vx_t + \left[\dot b_t -b_t\frac{\dot a_t}{a_t}\right] \mathbb{E}[\vx_1|\vx_t] \notag\\
    &= \frac{\dot a_t}{a_t}\vx_t - \left[\dot b_t b_t - b_t^2 \frac{\dot a_t}{a_t}\right] \nabla \log p_t(\vx_t), \label{eq:vtp}
\end{align}
where the second equality comes from \cite{lipman2023flow}
\begin{align}
    \nabla_{\vx_t}\log p_t(\vx_t) 
    &= \mathbb{E}_{\vx_0\sim q}[\nabla_{\vx_t} \log p_t(\vx_t|\vx_0)|\vx_t]\notag\\
    &=-\frac{1}{b_t}\mathbb{E}[\vx_1|\vx_t]\label{eqn:v_score}
\end{align}
since $\nabla_{\vx_t} \log p(\vx_t|\vx_0) = -\frac{1}{b_t^2}(\vx_t-a_t\vx_0)$ which arises from
$p_{t}(\vx_t|\vx_0)=\mathcal{N}(\vx_t|a_t \vx_0, b_t^2 \rmI)$.

Given that $v_t(\vx_t)$ in \eqref{eq:vtp} generates the probability path $p_t(\x_t)$,  we can easily
see that for the given measurement model \eqref{eqn:ip},  the velocity field conditioned on measurement $\vy$  given by
\begin{align}
    v_t(\vx_t|\vy) = \frac{\dot a_t}{a_t} \vx_t - \left[\dot a_t b_t - b_t^2 \frac{\dot a_t}{a_t}\right] \nabla_{\vx_t} \log p_t(\vx_t|\vy).
    \label{eqn:v_score_post}
\end{align}
generates the posterior sample path  according to $p_t(\x_t|\vy)$.
Using Bayes' rule, the score of the posterior distribution is
\begin{align}
   \nabla_{\vx_t} \log p_t(\vx_t|\vy) = \nabla_{\vx_t} \log p_t(\vy|\vx_t) + \nabla_{\vx_t} \log p_t(\vx_t).
\end{align}
Accordingly,  the velocity field is expressed by \cite{lipman2023flow}
\begin{align}
    v_t(\vx_t|\vy)=v_t(\vx_t) - \zeta_t \nabla_{\vx_t} \log p_t(\vy|\vx_t),
    \label{eqn:v_post}
\end{align}
where $\zeta_t=\dot a_t b_t - b_t^2 \frac{\dot a_t}{a_t}$. Thus, we can conduct posterior sampling by adding likelihood gradient to the original velocity field.

Unfortunately, the calculation of $\nabla_{\vx_t} \log p_t(\vy|\vx_t)$ is computationally prohibited
due to the integration with respect to $\vx_0$. 
Consequently, one could use the popular approximation technique in diffusion model called DPS
\cite{chung2023diffusion}:
    \begin{align}
  \nabla_{\vx_t} \log p_t(\vy|\vx_t)
        &\approx \nabla_{\vx_t} \log p(\vy|\mathbb{E}[\vx_0|\vx_t]) \label{eq:DPS}
    \end{align}
%
As \eqref{eq:DPS} requires backpropagation through neural networks, which could be often unstable,
one could use further approximation assuming piecewise linearity of the clean image manifold.
Specifically,  using the change of the variable, we have
%
%
\begin{align}
    \nabla_{\vx_t} \log p_t(\vy|\hat \vx_{0|t})=  J_\theta(\vx_t)\nabla_{\hat \vx_{0|t}} \log p(\vy|\hat \vx_{0|t}),
\end{align}
where $\hat \vx_{0|t}:=\mathbb{E}[\vx_0|\vx_t]$ refers to the denoised estimate and  
%
$J_\theta(\vx_t)=\partial \hat\vx_{0|t} / \partial \vx_t$ denotes the Jacobian.
The following proposition, which is a flow-version inspired by \cite{chung2024decomposed}, gives us an insight how to bypass
Jacobian computation.
%
\begin{restatable}{prop}{mcg}
    \label{prop:mcg}
    Suppose the clean data manifold $\mathcal{M}$ is represented as an affine subspace  and assume uniform distribution over $\mathcal{M}$. Then,
    \begin{align}
        J_\theta(\vx_t)=\frac{\partial \hat \vx_{0|t}}{\partial \vx_t}=\frac{1}{a_t}\mathcal{P}_{\mathcal{M}}
    \end{align}
    where $\mathcal{P}_{\mathcal{M}}$ denotes the orthogonal projection to $\mathcal{M}$.
\end{restatable}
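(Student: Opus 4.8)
The plan is to compute the Jacobian by combining the flow-version of Tweedie's formula (Proposition~\ref{prop:tweedie}) with an explicit evaluation of the score $\nabla_{\vx_t}\log p_t(\vx_t)$ under the stated manifold hypothesis. From Proposition~\ref{prop:tweedie} we have $\hat\vx_{0|t} = \left[a_t - \dot a_t \frac{b_t}{\dot b_t}\right]^{-1}\left(\vx_t - \frac{b_t}{\dot b_t}v_t(\vx_t)\right)$, and combining \eqref{eq:vtp} with \eqref{eqn:v_score} we can also write $\hat\vx_{0|t}$ directly in terms of the score: indeed $v_t(\vx_t) = \frac{\dot a_t}{a_t}\vx_t - \left[\dot b_t b_t - b_t^2\frac{\dot a_t}{a_t}\right]\nabla\log p_t(\vx_t)$, so substituting and simplifying the $a_t,b_t,\dot a_t,\dot b_t$ bookkeeping yields a clean expression of the form $\hat\vx_{0|t} = \frac{1}{a_t}\vx_t + \frac{b_t^2}{a_t}\nabla_{\vx_t}\log p_t(\vx_t)$. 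Differentiating this with respect to $\vx_t$ gives $J_\theta(\vx_t) = \frac{1}{a_t}\rmI + \frac{b_t^2}{a_t}\nabla_{\vx_t}^2\log p_t(\vx_t)$, so the whole problem reduces to computing the Hessian of the log-density of $p_t$.

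Next I would evaluate $p_t(\vx_t)$ explicitly. Since $\vx_t = a_t\vx_0 + b_t\vx_1$ with $\vx_1\sim\mathcal{N}(0,\rmI)$ and $\vx_0$ uniform on the affine subspace $\mathcal{M}$, write $\mathcal{M} = \vmu + V$ for a linear subspace $V$ with orthogonal projection $\mathcal{P}_{\mathcal{M}}$ onto $V$ (as a linear map; affine offset handled by $\vmu$). Decompose $\vx_t$ into its component in $V$ and its component in $V^\perp$: the $V^\perp$ component is a pure Gaussian $\mathcal{N}(a_t\mathcal{P}_{\mathcal{M}^\perp}\vmu, b_t^2\rmI)$ on $V^\perp$, and the $V$ component is a convolution of a flat (improper uniform, or large-variance Gaussian limit) density with a Gaussian of variance $b_t^2$, which is again flat on $V$. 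Hence $\log p_t(\vx_t) = -\frac{1}{2b_t^2}\|\mathcal{P}_{\mathcal{M}^\perp}(\vx_t - a_t\vmu)\|^2 + \text{const}$, where $\mathcal{P}_{\mathcal{M}^\perp} = \rmI - \mathcal{P}_{\mathcal{M}}$. Its Hessian is therefore $\nabla_{\vx_t}^2\log p_t(\vx_t) = -\frac{1}{b_t^2}\mathcal{P}_{\mathcal{M}^\perp} = -\frac{1}{b_t^2}(\rmI - \mathcal{P}_{\mathcal{M}})$.

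Plugging this into the expression for $J_\theta$ collapses everything: $J_\theta(\vx_t) = \frac{1}{a_t}\rmI + \frac{b_t^2}{a_t}\cdot\left(-\frac{1}{b_t^2}\right)(\rmI - \mathcal{P}_{\mathcal{M}}) = \frac{1}{a_t}\mathcal{P}_{\mathcal{M}}$, which is the claim. The routine parts are the algebraic simplification of the Tweedie coefficients (using the boundary conditions $a_0=1, b_0=0$ only implicitly, and the affine relations) and the Gaussian bookkeeping. The main obstacle — really a modeling subtlety rather than a hard computation — is making the "uniform distribution over $\mathcal{M}$" rigorous when $\mathcal{M}$ is unbounded: the clean fix is to treat the uniform prior as an improper flat prior on $V$ (or take a limit of wide Gaussians), verify that the marginal $p_t$ restricted to directions in $V$ is genuinely flat so those directions contribute nothing to the Hessian, and confirm that only the $V^\perp$ directions carry curvature. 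I would state this as an idealization up front, following \cite{chung2024decomposed}, and keep the subspace-splitting argument as the technical core.
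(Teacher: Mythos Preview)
Your proposal is correct and follows essentially the same approach as the paper: express $\hat\vx_{0|t}$ in terms of the score via $\hat\vx_{0|t}=\frac{1}{a_t}(\vx_t+b_t^2\nabla_{\vx_t}\log p_t(\vx_t))$, compute the score under the affine-subspace/uniform assumption to get $\nabla_{\vx_t}\log p_t(\vx_t)=-\frac{1}{b_t^2}\mathcal{P}_{\mathcal{M}^\perp}\vx_t$ (up to an affine offset irrelevant for the Hessian), and differentiate. The only cosmetic difference is that the paper carries out the score computation via an explicit finite-variance Gaussian prior and a complete-the-square integration before sending $\sigma\to\infty$, whereas you argue the same limit more directly by splitting into $V$ and $V^\perp$ marginals; your suggested ``wide Gaussian limit'' fix for the improper uniform is precisely what the paper does.
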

\noindent
This implies that
if the standard gradient at the denoised estimate $\hat\vx_{0|t}$ does not leave
the clean manifold, we have
\begin{align}
\nabla_{\vx_t} \log p_t(\vy|\hat \vx_{0|t})&=\frac{1}{a_t}\mathcal{P}_{\mathcal{M}} \nabla_{\hat \vx_{0|t}} \log p(\vy|\hat \vx_{0|t})\notag\\
&= \frac{1}{a_t}\nabla_{\hat \vx_{0|t}} \log p(\vy|\hat \vx_{0|t}) 
\end{align}
Hence, the resulting posterior velocity field is given by
\begin{align}
    v_t(\vx_t|\vy)\approx v_t(\vx_t) - \frac{\zeta_t}{a_t} \nabla_{\hat \vx_{0|t}} \log p(\vy|\hat \vx_{0|t}),
\end{align}
where $\frac{\zeta_t}{a_t} = \dot a_t\frac{b_t}{a_t} (1 - \frac{b_t}{a_t})$. 

%
%

\begin{figure}
    \centering
    \includegraphics[width=\linewidth]{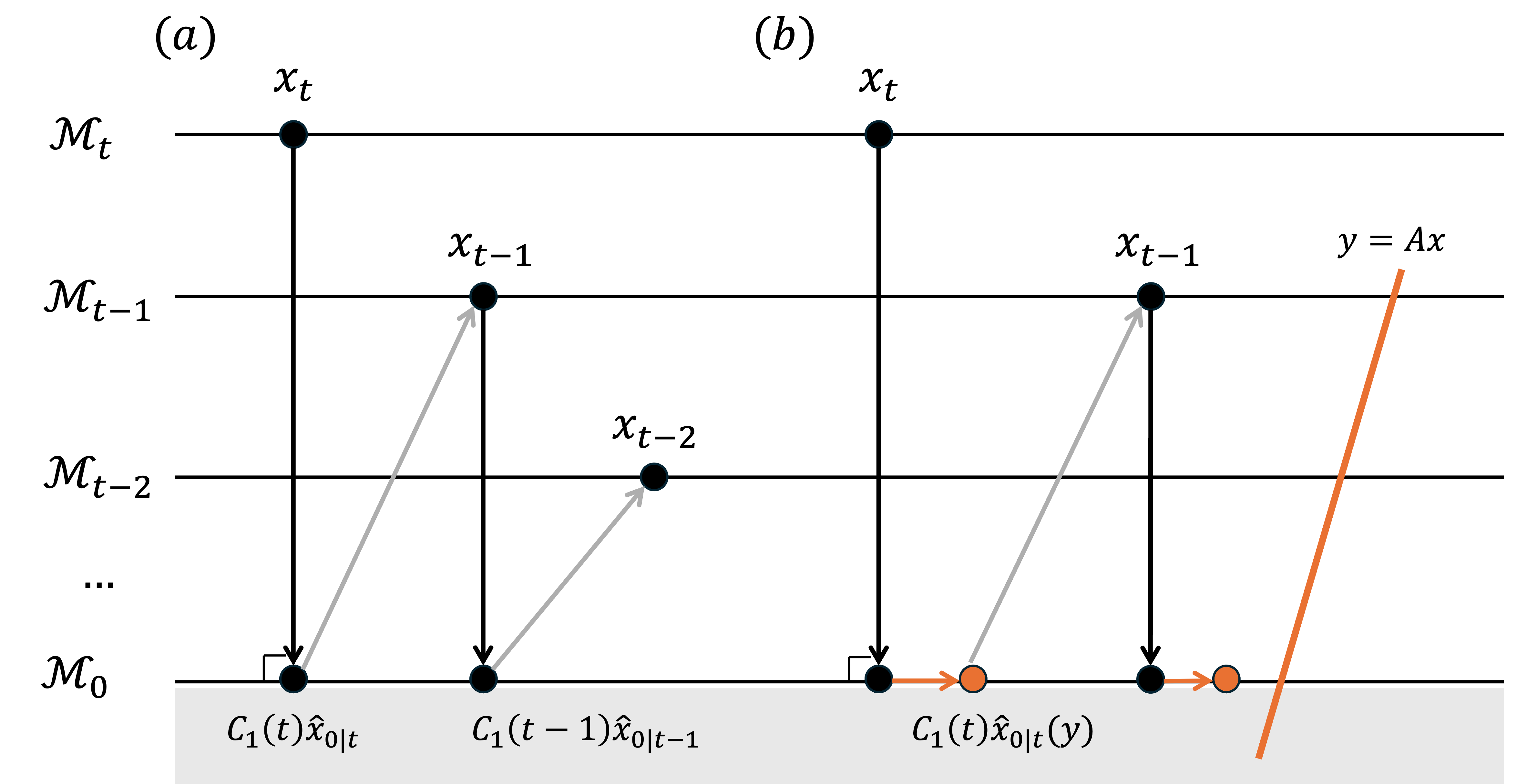}
    \caption{Geometry of FlowDPS. (a) Unconditional sampling of flow models where denoising and renoising are performed alternatively. (b) Posterior sampling of flow models where the data consistency offset is added to the denoised estimate. Orange arrow denotes the likelihood gradient in \eqref{eqn:likelihood_grad}.}
    \label{fig:concept}
\end{figure}

\paragraph{Posterior flow sampling.}
Finally, we solve the Euler method for flow ODE using the posterior flow velocity:
\begin{align}
    \vx_{t+dt} &= \vx_t + v_t(\vx_t|\vy) dt\nonumber\\
    &=\vx_t +\left( v_t(\vx_t) - \frac{\zeta_t}{a_t} \nabla_{\hat\vx_{0|t}} \log p(\vy|\hat\vx_{0|t})\right)dt\nonumber\\
    &=C_1(t)\tilde\vx_{0|t} + C_2(t) \hat\vx_{1|t},
    \label{eqn:euler_step_posterior}
\end{align}
where 
\begin{align}\label{eqn:tildetweedie}
\tilde \vx_{0|t} = \hat\vx_{0|t}-\beta_t\nabla_{\hat \vx_{0|t}} \log p(\vy|\hat \vx_{0|t})
\end{align} 
with $\beta_t := \frac{\zeta_t}{a_t}  \frac{dt}{C_1(t)}$. 
By inspecting ~\eqref{eqn:decompose} and \eqref{eqn:stochastic}, we can also use the following
DDIM version of posterior flow in an SDE form:
\begin{align}
    \vx_{t+dt} 
    &=C_1(t)\tilde\vx_{0|t} + C_2(t) \tilde\vx_{1|t},
    \label{eqn:euler_step_posterior_stochastic}
\end{align}
where $\tilde\vx_{1|t}$ is given by \eqref{eq:mixs}.
We call this general flow-based inverse problem solver as {\em Flow-Driven Posterior Sampling (FlowDPS)}. 
Fig.~\ref{fig:concept} depicts the geometric insight of FlowDPS, which shows striking similarity to the geometry of various diffusion posterior sampling methods.

\vspace*{0.1cm}
\noindent\textbf{Linear conditional flow.}
For the case of linear conditional flow, we have $a_t=1-t$ and $b_t=t$, leading to
$C_1(t) = 1-\sigma_{t+dt}$,
$C_2(t) = \sigma_{t+dt}$,  where $\sigma_{t}=t$.
Furthermore, the corresponding Tweedie formula becomes
\begin{align}
\hat \vx_{0|t} =\vx_t-t v_t(\vx_t),\quad
\hat \vx_{1|t}=\vx_t+(1-t)v_t(\vx_t)
\end{align}
 Then, FlowDPS can be implemented as 
\begin{align}
    \vx_{t+dt}= (1-\sigma_{t+dt}) \tilde \vx_{0|t} + \sigma_{t+dt} \tilde \vx_{1|t},
\end{align}
where $\tilde\vx_{0|t}$ and $\tilde \vx_{1|t}$ are from \eqref{eqn:tildetweedie} and \eqref{eq:mixs}.

\subsection{Comparison with other methods}
In this section, we discuss the improvements of FlowDPS compared to existing methods. FlowChef~\cite{patel2024steering} attempts to guide intermediate points of the flow ODE using the gradient of a loss function,
\begin{align}\label{eq:flowchef}
    \vx_t \leftarrow \vx_t - s \nabla_{\vx_{0|t}}\mathcal{L}(\rmA\hat\vx_{0|t}, \vy)
\end{align}
where $\mathcal{L}(\rmA\hat\vx_{0|t}, \vy)= \|\rmA\hat\vx_{0|t}-\vy\|^2$ and $s$ denotes constant step size.
For readers, \eqref{eq:flowchef} may seem similar to our FlowDPS at first glance. However, there are crucial differences between the two approaches.

While FlowChef applies a guiding gradient for $\vx_t$ as a whole, FlowDPS decomposes the flow ODE into $\vx_{0|t}$ and $\vx_{1|t}$, applying the gradient to the appropriate component $\vx_{0|t}$. Specifically, FlowDPS allows for inherent usage of adaptive step sizing with the scaling factor $\beta_t$, 
 and incorporation of stochastic noise-enhancements, which cannot be trivially introduced within the FlowChef framework.
As illustrated in Fig.~\ref{fig:beta} in the Appendix \ref{app:beta},
{$-\beta_t$ rapidly decreases to zero. This progression implies that our algorithm prioritizes likelihood maximization in the early denoising steps, aligning the sample with measurements.} Then, as denoising proceeds, the process gradually reverts to the unconstrained flow trajectory, compensating for any deviation introduced by the likelihood gradient.
This adaptive weighting of the guidance term is crucial. Empirically, we found that the overall structure of the sample is primarily determined in the early stages of the flow, while fine details are refined in the later stages. Thus, by dynamically adjusting the guidance influence, FlowDPS ensures a smooth transition from a measurement-consistent solution to a generative prior-driven process.
Most importantly, our approach establishes a clearer connection to posterior sampling, offering a more principled framework compared to prior methods \cite{patel2024steering, martin2025pnpflow}, which introduce a guiding vector to the ODE trajectory in a more heuristic manner.

\subsection{Latent FlowDPS}
\label{sec:latent_likelihood}

Similar to the score-based diffusion models, flow-based models can be defined in latent space. This enables flexible conditioning and reduces memory/time cost for generation. 
Specifically, let $\mathcal{E}_\phi$ and $\mathcal{D}_\varphi$ as the encoder and decoder that map samples between image and latent spaces, satisfying
\begin{align}
    \vz_0 = \mathcal{E}_\phi(\vx_0) = \mathcal{E}_\phi (\mathcal{D}_\varphi (\vz_0))
    \label{eqn:vae}
\end{align}
where $\vx_0$ and $\vz_0$ denote clean image and corresponding latent code, respectively.
As $\mathcal{E}_\phi$ and $\mathcal{D}_\varphi$ are fixed in our method, we discard subscript $\phi$ and $\varphi$ for brevity in the following sections.
By defining the target distribution $p_0$ as the distribution of latent code, conditional flow $\psi_t(\z|\z_0)$ and conditional velocity field $v_t(\z|\z_0)$ can be defined.
This allows us to solve problems at higher pixel resolutions, to leverage text conditions for the solution, and to leverage foundational generative models (such as various versions of Stable Diffusion).

\begin{figure*}[t]
    \centering
    \includegraphics[width=\linewidth]{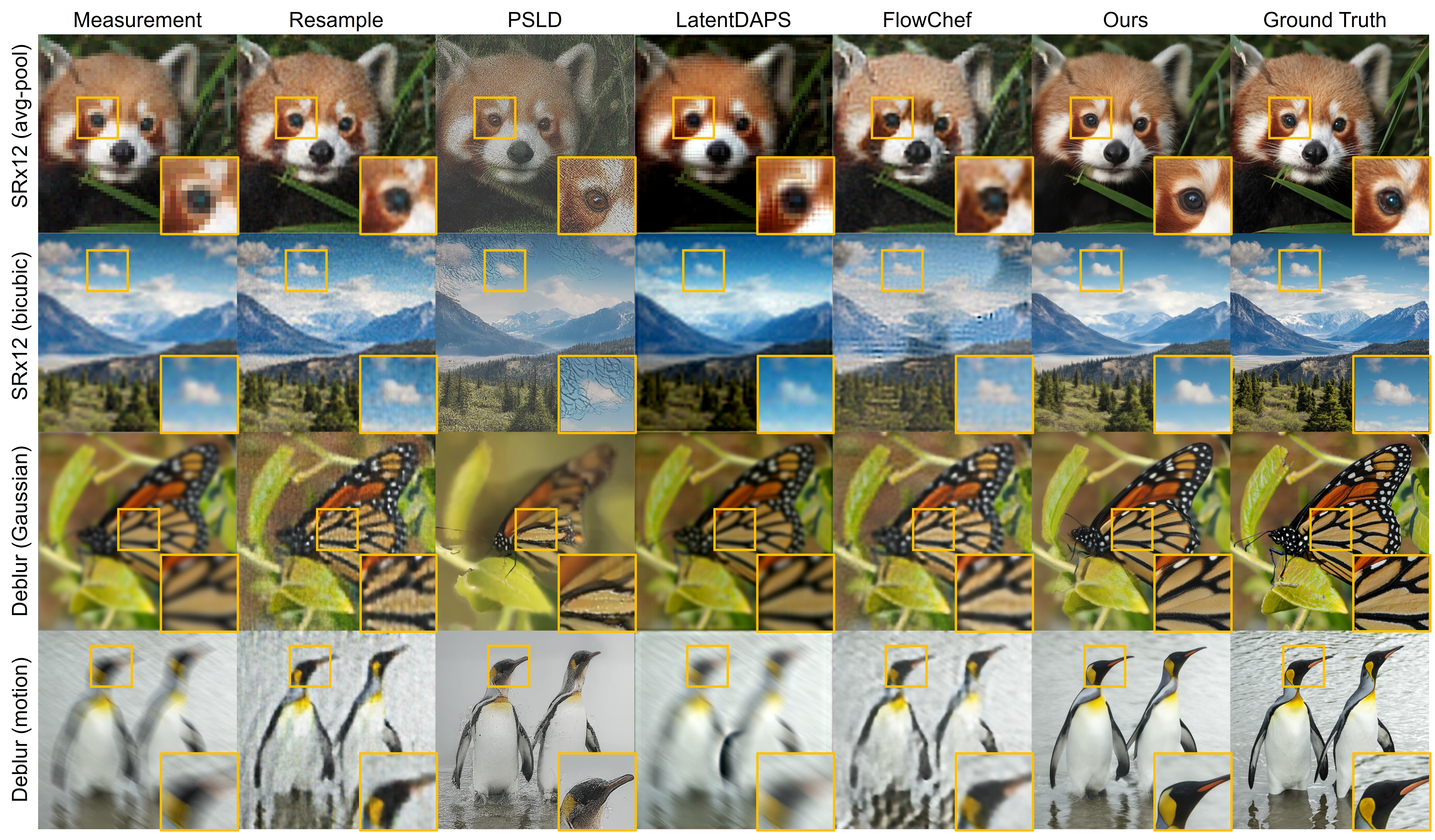}
    \caption{Qualitative comparison for linear inverse problems with DIV2K validation set. Insets show an enlarged view of the highlighted yellow boxes}
    \label{fig:qual_div2k}
\end{figure*}

Specifically, the denoised estimate of the latent code $\vz_t$ is expressed as
\begin{align}
    \tilde\vz_{0|t} = \hat\vz_{0|t}-\beta_t\nabla_{\hat\vz_{0|t}}\log p(\vy|\hat\vz_{0|t}).
    \label{eqn:latent_grad}    
\end{align}
The gradient is computed as
\begin{align}
    \nabla_{\hat \vz_{0|t}} \log p(\vy|\hat\vz_{0|t})
    &=\nabla_{\hat \vz_{0|t}} \log \int p(\vy|\vx_0)p(\vx_0|\hat\vz_{0|t})d\vx_0 \notag\\
    &\simeq \nabla_{\hat \vz_{0|t}} \log p(\vy|\mathcal{D}(\hat\vz_{0|t}))\label{eqn:likelihood_grad}
\end{align}
where the approximation comes from the assumption that $p(\vx_0|\vz_0):=\delta(\vx_0 - \mathcal{D}
(\vz_0))$ with decoder $\mathcal{D}$ for a near-perfect autoencoder.
As our forward model in \eqref{eqn:ip} includes Gaussian noise, we can model
\begin{align}
    p(\vy|\vx_0) = \mathcal{N}(\vy|\rmA\vx_0, \sigma_n^2 \rmI_m)
\end{align}
and the likelihood gradient \eqref{eqn:likelihood_grad} is computed by
\begin{align}
    \nabla_{\hat \vz_{0|t}} \log p(\vy|\mathcal{D}(\hat\vz_{0|t})) = - \nabla_{\hat\vz_{0|t}} \frac{\|\vy-\rmA\mathcal{D}(\hat\vz_{0|t})\|^2}{2\sigma_n^2}.
\end{align}
Building on the concept of Decomposed Diffusion Sampling (DDS) \cite{chung2023diffusion}, which employs multiple conjugate gradient steps to refine the denoised estimate, thereby accelerating convergence to the inverse solution while preserving updates within the affine space, we adopt a similar approach. Specifically, instead of a single gradient step for 
 $\tilde\vz_{0|t}$  as  in \eqref{eqn:latent_grad}, we perform multiple gradient steps to obtain 
$\hat\vz_{0|t}(\vy)$, enhancing the accuracy of the estimate.
To mitigate significant deviations of 
$\tilde \vz_{0|t}$ 
  from the flow model's sampling trajectory due to data consistency updates, we introduce an interpolation mechanism between the initial 
initial $\hat\vz_{0|t}$ and updated $\hat\vz_{0|t}(\vy)$. This is formulated as $\tilde \vz_{0|t} = (1-\gamma)\hat\vz_{0|t} + \gamma \hat\vz_{0|t}(\vy)$.
This interpolation ensures that the update remains aligned with the flow model's trajectory, preventing excessive divergence while maintaining consistency with the observed data.
%
%

\begin{table*}[t!]
    \centering
    \resizebox{\linewidth}{!}{
    \begin{tabular}{ccccccccccccccccc}
    \toprule
    \rowcolor{Gray}
    & \multicolumn{16}{c}{AFHQ 1k (768 $\times$ 768)}\\
    & \multicolumn{4}{c}{Super-resolution x12 (Avgpool)} & \multicolumn{4}{c}{Super-resolution x12 (Bicubic)} & \multicolumn{4}{c}{Deblurring (Gauss)} & \multicolumn{4}{c}{Deblurring (Motion)}\\
    \midrule
    Method &
    PSNR$\uparrow$ & SSIM$\uparrow$ & FID$\downarrow$ & LPIPS$\downarrow$ & PSNR$\uparrow$ & SSIM$\uparrow$ & FID$\downarrow$ & LPIPS$\downarrow$ & PSNR$\uparrow$ & SSIM$\uparrow$ & FID$\downarrow$ & LPIPS$\downarrow$ & PSNR$\uparrow$ & SSIM$\uparrow$ & FID$\downarrow$ & LPIPS$\downarrow$ \\
    \midrule
    ReSample &
    \underline{24.92} & \textbf{0.677} & 41.17 & 0.300 & \underline{24.94} & \textbf{0.676} & 39.94 & 0.297 & \underline{24.67} & \underline{0.665} & 44.22 & 0.325 & 24.97 & \underline{0.688} & 38.78 & 0.293 \\
    LatentDAPS &
    17.76 & 0.492 & 82.66 & 0.406 & 18.35 & 0.565 & 43.12 & 0.325 & 22.15 & 0.653 & \underline{27.89} & 0.280 & 21.61 & 0.670 & 52.28 & 0.315 \\
    PSLD &
    15.66 & 0.249 & 81.42 & 0.490 & 15.23 & 0.274 & 138.6 & 0.521 & 13.87 & 0.476 & 240.6 & 0.548 & 13.63 & 0.479 & 257.9 & 0.563 \\
    FlowChef &
    23.93 & 0.641 & \underline{21.14} & \underline{0.249} & 24.79 & 0.633 & \underline{21.31} & \underline{0.256} & 22.75 & \textbf{0.703} & 36.46 & \underline{0.267} & \textbf{26.16} & \textbf{0.732} & 39.19 & \underline{0.246} \\
    Ours &
    \textbf{25.88} & \underline{0.661} & \textbf{16.85} & \textbf{0.198} & \textbf{26.07} & \underline{0.662} & \textbf{15.71} & \textbf{0.188} & \textbf{25.33} & 0.653 & \textbf{23.00} & \textbf{0.238} & \underline{25.61} & 0.657 & \textbf{19.99} & \textbf{0.222} \\
    \midrule
    \rowcolor{Gray}
    & \multicolumn{16}{c}{FFHQ 1k (768 $\times$ 768)}\\
    & \multicolumn{4}{c}{Super-resolution x12 (Avgpool)} & \multicolumn{4}{c}{Super-resolution x12 (Bicubic)} & \multicolumn{4}{c}{Deblurring (Gauss)} & \multicolumn{4}{c}{Deblurring (Motion)}\\
    \midrule
    Method &
    PSNR$\uparrow$ & SSIM$\uparrow$ & FID$\downarrow$ & LPIPS$\downarrow$ & PSNR$\uparrow$ & SSIM$\uparrow$ & FID$\downarrow$ & LPIPS$\downarrow$ & PSNR$\uparrow$ & SSIM$\uparrow$ & FID$\downarrow$ & LPIPS$\downarrow$ & PSNR$\uparrow$ & SSIM$\uparrow$ & FID$\downarrow$ & LPIPS$\downarrow$ \\
    \midrule
    ReSample &
    25.09 & \textbf{0.734} & 102.7 & 0.304 & 25.09 & \textbf{0.732} & 102.4 & 0.301 & \underline{24.95} & \underline{0.724} & 107.4 & 0.333 & \underline{25.39} & \underline{0.745} & 95.16 & 0.303 \\
    LatentDAPS &
    18.09 & 0.541 & 128.8 & 0.368 & 18.62 & 0.614 & 105.1 & 0.294 & 22.56 & 0.714 & \underline{72.97} & \underline{0.248} & 22.06 & 0.722 & \underline{81.63} & 0.287 \\
    PSLD &
    15.05 & 0.240 & 142.1 & 0.471 & 15.20 & 0.275 & 141.6 & 0.472 & 12.98 & 0.519 & 265.6 & 0.563 & 12.98 & 0.527 & 268.1 & 0.575 \\
    FlowChef &
    \underline{25.40} & 0.696 & \underline{41.50} & \underline{0.218} & \underline{25.31} & 0.692 & \underline{39.75} & \underline{0.224} & 23.14 & \textbf{0.767} & 112.8 & 0.251 & 23.14 & \textbf{0.772} & 104.7 & \underline{0.264} \\
    Ours &
    \textbf{26.81} & \underline{0.703} & \textbf{33.78} & \textbf{0.159} & \textbf{27.00} & \underline{0.702} & \textbf{33.75} & \textbf{0.154} & \textbf{26.23} & 0.702 & \textbf{41.31} & \textbf{0.197} & \textbf{26.62} & 0.699 & \textbf{38.14} & \textbf{0.182} \\
    \midrule
    \rowcolor{Gray}
    & \multicolumn{16}{c}{DIV2K 0.8k (768 $\times$ 768)}\\
    & \multicolumn{4}{c}{Super-resolution x12 (Avgpool)} & \multicolumn{4}{c}{Super-resolution x12 (Bicubic)} & \multicolumn{4}{c}{Deblurring (Gauss)} & \multicolumn{4}{c}{Deblurring (Motion)}\\
    \midrule
    Method &
    PSNR$\uparrow$ & SSIM$\uparrow$ & FID$\downarrow$ & LPIPS$\downarrow$ & PSNR$\uparrow$ & SSIM$\uparrow$ & FID$\downarrow$ & LPIPS$\downarrow$ & PSNR$\uparrow$ & SSIM$\uparrow$ & FID$\downarrow$ & LPIPS$\downarrow$ & PSNR$\uparrow$ & SSIM$\uparrow$ & FID$\downarrow$ & LPIPS$\downarrow$ \\
    \midrule
    ReSample &
    \underline{20.41} & \textbf{0.505} & \underline{101.5} & \underline{0.352} & \underline{20.48} & \textbf{0.504} & \underline{99.73} & \underline{0.349} & \underline{20.31} & 0.493 & 113.7 & 0.386 & \underline{20.82} & 0.530 & 96.73 & 0.343 \\
    LatentDAPS &
    14.34 & 0.356 & 183.3 & 0.412 & 16.92 & 0.437 & 104.7 & 0.383 & 19.38 & \underline{0.503} & \underline{83.72} & 0.346 & 19.20 & \underline{0.542} & 121.7 & 0.380 \\
    PSLD &
    13.90 & 0.187 & 124.0 & 0.523 & 14.01 & 0.207 & 118.1 & 0.519 & 14.92 & 0.313 & 97.20 & 0.527 & 14.54 & 0.307 & 108.3 & 0.547 \\
    FlowChef &
    18.02 & \underline{0.495} & \underline{101.5} & 0.355 & 15.12 & 0.441 & 123.4 & 0.433 & 19.54 & \textbf{0.520} & 98.27 & \underline{0.343} & 19.80 & \textbf{0.546} & \underline{88.86} & \underline{0.313}\\
    Ours &
    \textbf{20.84} & 0.488 & \textbf{58.29} & \textbf{0.258} & \textbf{21.10} & \underline{0.492} & \textbf{56.19} & \textbf{0.252} & \textbf{20.46} & 0.473 & \textbf{72.13} & \textbf{0.319} & \textbf{21.04} & 0.495 & \textbf{62.68} & \textbf{0.291} \\
    \bottomrule
    \end{tabular}
    }
    \caption{Quantitative comparison for linear inverse problems. For each dataset, we evaluate inverse problem solvers using 1k, 1k, and 0.8k image sets of AHFQ, FFHQ and DIV2K. \textbf{Bold}: the best, \underline{Underline}: the second best.}
    \label{tab:quantitative}
    \vspace{-5mm}
\end{table*}

Specifically, we devise our algorithm by setting the values $\gamma=\sigma_t$ and $\eta=1-\sigma_{t+dt}$.
This is to imitate the behavior of $\beta_t$ in \eqref{eqn:tildetweedie}, which monotonically decreases to zero.
Specifically, the selection $\gamma=\sigma_t$ puts emphasis on $\hat\vz_{0|t}(\vy)$ in early steps, guiding $\tilde\vz_{0|t}$ towards a more data-consistent path. $\hat\vz_{0|t}$ is weighted more highly in later steps for better generation of high-frequency details.
Similarly, stochastic noise $\epsilonb$ is added under $\eta=1-\sigma_{t+dt}$ to use a higher level of stochastic noise in later steps and a higher level of deterministic noise $\hat\vz_{1|t}$ in early steps. 
Further analysis regarding our selections is provided in Sec.~\ref{sec:ablation}. The overall algorithm is described in Algorithm~\ref{alg:FlowDPS}.

\begin{algorithm}
\caption{Algorithm of {FlowDPS} (SD3.0, FLUX)}\label{alg:FlowDPS}
\begin{algorithmic}[1]
\Require Measurement $\vy$, Linear operator $\rmA$, Pre-trained flow-based model $\vv_\theta$, VAE encoder and Decoder $\mathcal{E}, \mathcal{D}$, Text embeddings $c_\varnothing, c$, CFG scale $\lambda$, Stochasticity level $\eta$, Noise Schedule $\sigma_t$
\State $\vz \sim \mathcal{N}(0, \rmI_d)$
\For{$t: 1\rightarrow 0$}
    \State $\vv_t(\vz) \gets \vv_\theta(\vz, c_\varnothing) + \lambda (\vv_\theta(\vz, c) - \vv_\theta(\vz, c_\varnothing))$
    \State $\hat \vz_{0|t} \gets \vz - \sigma_t\vv_t(\vz)$
    \State $\hat \vz_{1|t} \gets \vz + (1-\sigma_t)\vv_t(\vz)$
    \\\Comment{1. Likelihood Gradient}
    \State $\hat \vz_{0|t}(\vy) \gets \argmin_{\vz} \| \vy - \rmA \mathcal{D}(\vz) \|^2$.  
    \State $\tilde \vz_{0|t} \gets \sigma_t \hat \vz_{0|t}(\vy) + (1-\sigma_t) \hat \vz_{0|t}$
    \\\Comment{2. Stochasticity}
    \State $\epsilon \sim \mathcal{N}(0, \rmI_d)$
    \State $\tilde \vz_{1|t} \gets \sqrt{\sigma_{t+dt}}\vz_{1|t} + \sqrt{1-\sigma_{t+dt}}\epsilon$
    \\\Comment{3. Euler update}
    \State $\vz \gets (1-\sigma_{t+dt})\tilde \vz_{0|t} + \sigma_{t+dt} \tilde \vz_{1|t}$
\EndFor
\end{algorithmic}
\end{algorithm}

\section{Experiments}
\subsection{Experimental setup}

\noindent\textbf{Datasets and evaluation metrics.}
To demonstrate the performance of the proposed method in various domains, we use three datasets: 1k validation images of AFHQ, FFHQ, and 0.8k training images of DIV2K. We set the image resolution to $768 \times 768$ by resizing the original dataset.
Quantitative evaluation is conducted on PSNR, SSIM for pixel-level fidelity, and FID, LPIPS for perceptual quality.

\noindent\textbf{Baselines.}
We compare our method with PSLD \cite{rout2024solving}, LatentDAPS \cite{zhang2024improving}, Resample \cite{song2024solving} (recent latent diffusion-based inverse problem solvers), and FlowChef \cite{patel2024steering} (a conditioning framework designed for flow-based models). To ensure fair comparison, we use the pre-trained Stable Diffusion 3.0 - medium \cite{esser2024scaling} as the backbone model for all solvers. Further details are provided in the Appendix. The results of FLUX version of FlowDPS can be also found in Appendix.

\noindent\textbf{Text conditioning.}
We leverage text prompts as conditions for the latent flow-based model to refine the solution. Specifically, we use the prompt ``a photo of a closed face of a dog (cat)" for AFHQ, and the prompt ``a photo of a closed face" for FFHQ.
Notably, defining an appropriate text prompt solely based on the given measurement can be challenging in practical scenarios.
For instance, the text prompts for the DIV2K dtaset must include more details compared to AFHQ or FFHQ, since each image may contain multiple  objects.
To evaluate effectiveness in such real-world scenarios, we utilize text prompts extracted from the measurement using DAPE \cite{wu2024seesr} as a degradation-aware text prompt extractor. For all cases, we set CFG \cite{ho2021classifierfree} scale to 2.0.

\subsection{Experimental Results with Inverse Problems}
\noindent\textbf{Noisy linear inverse problems.}
FlowDPS is a general inverse problem solver that does not require additional training, thus we demonstrate its broad applicability across four linear inverse problems. Specifically, we consider i) Super-resolution from average pooling with scale factor of 12, ii) Super-resolution from bicubic interpolation with scale factor of 12, iii) Gaussian deblurring with kernel size of 61 and standard deviation of 3.0, and iv) Motion deblurring with kernel size of 61 and intensity value of 0.5. In all tasks, we add Gaussian noise to the measurement with $\sigma_n=0.03$.

\begin{figure*}[t]
    \centering
    \includegraphics[width=\linewidth]{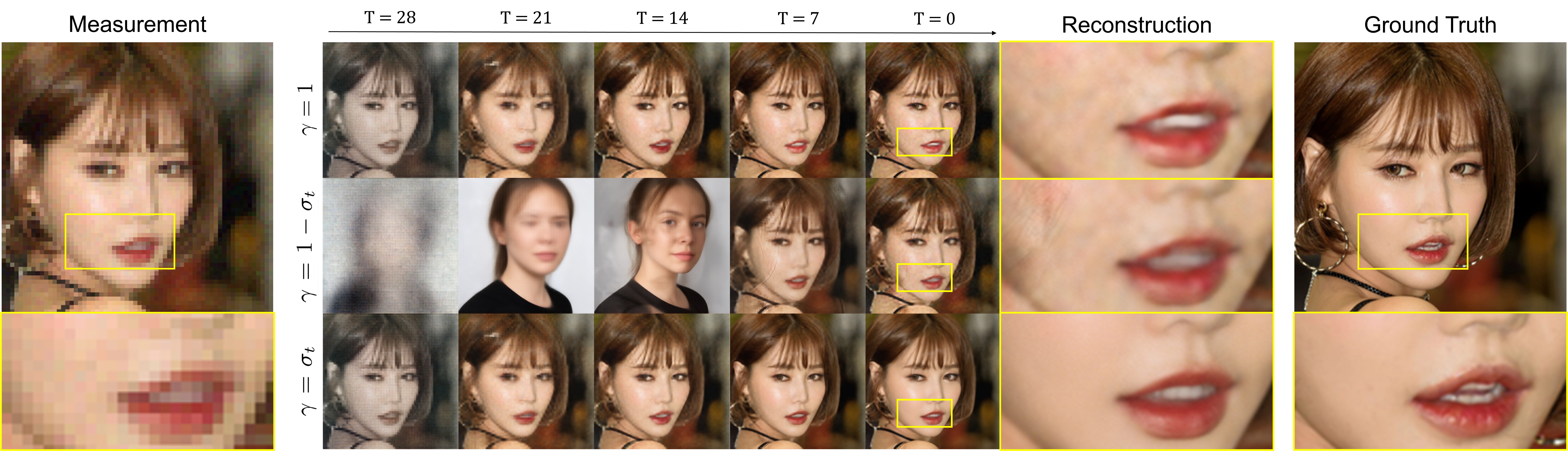}
    \caption{Sampling trajectories for various choices of interpolation scale $\gamma$. Insets show an enlarged view of the highlighted yellow boxes.}
    \label{fig:abl_gamma}
    \vspace{-3mm}
\end{figure*}

\begin{figure}[t]
    \centering
    \includegraphics[width=\linewidth]{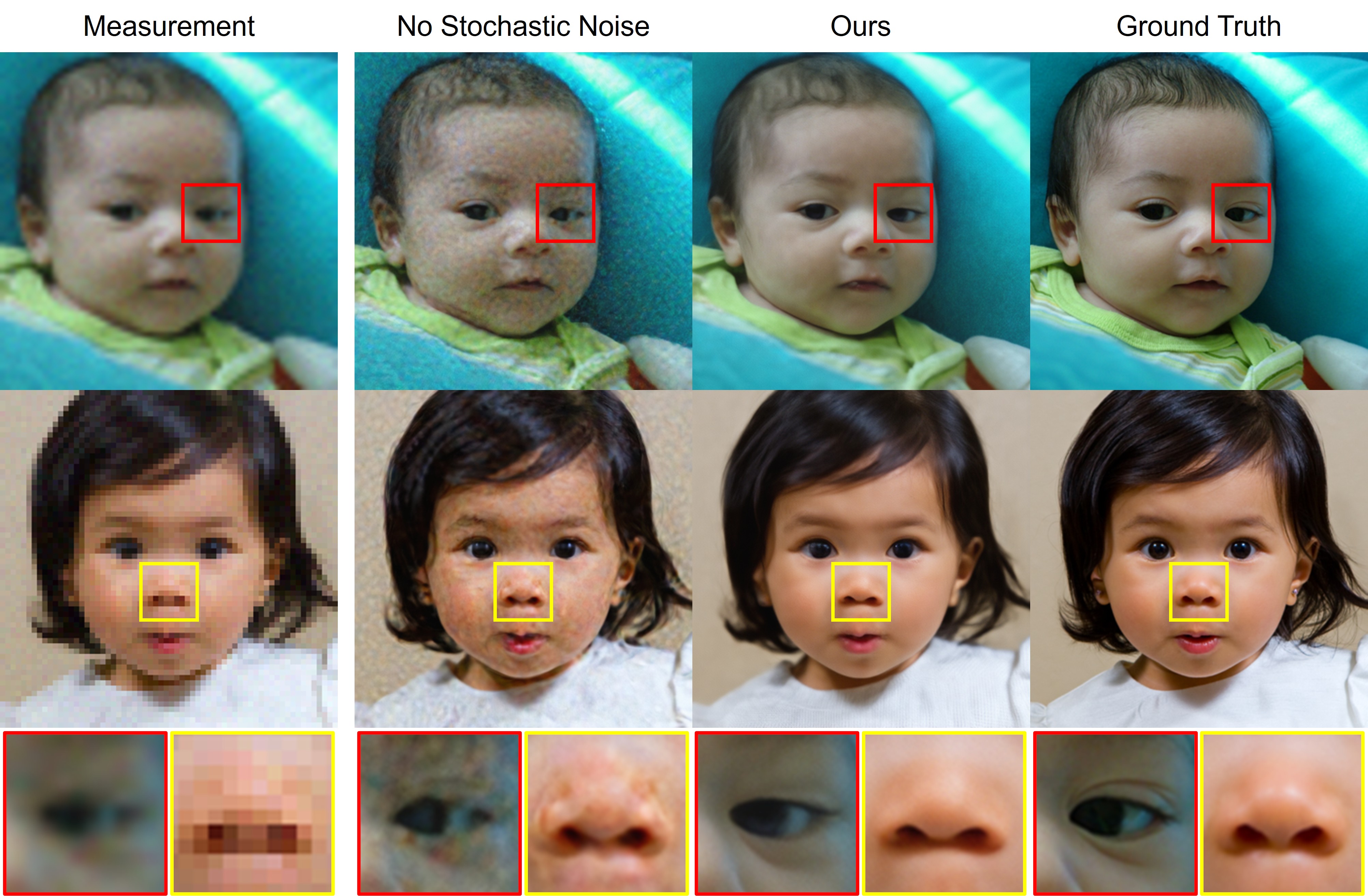}
    \caption{Qualitative results regarding ablation of stochastic noise.}
    \label{fig:abl_noise}
\end{figure}

\begin{table}[t]
    \centering
    \resizebox{\linewidth}{!}{
        \begin{tabular}{c cccc cccc}
        \toprule
        & \multicolumn{4}{c}{Super-resolution x12 (Avgpool)} & \multicolumn{4}{c}{Super-resolution x12 (Bicubic)} \\
        \midrule
        AFHQ &
        PSNR$\uparrow$ & SSIM$\uparrow$ & FID$\downarrow$ & LPIPS$\downarrow$ & PSNR$\uparrow$ & SSIM$\uparrow$ & FID$\downarrow$ & LPIPS$\downarrow$ \\
        \midrule
        Ours (w/o $\epsilon$) &
        24.75 & 0.609 & 32.85 & 0.251 & 24.80 & 0.602 & 34.81 & 0.268 \\
        Ours (w/ $\epsilon$) &
        \textbf{25.88} & \textbf{0.661} & \textbf{16.85} & \textbf{0.198} & \textbf{26.07} & \textbf{0.662} & \textbf{15.71} & \textbf{0.188}\\
        \midrule
        FFHQ &
        PSNR$\uparrow$ & SSIM$\uparrow$ & FID$\downarrow$ & LPIPS$\downarrow$ & PSNR$\uparrow$ & SSIM$\uparrow$ & FID$\downarrow$ & LPIPS$\downarrow$ \\
        \midrule
        Ours (w/o $\epsilon$) &
        25.61 & 0.668 & 78.67 & 0.249 & 25.53 & 0.656 & 82.07 & 0.275 \\
        Ours (w/ $\epsilon$) &
        \textbf{26.81} & \textbf{0.703} & \textbf{33.78} & \textbf{0.159} & \textbf{27.00} & \textbf{0.702} & \textbf{33.75} & \textbf{0.154} \\
        \bottomrule
        \end{tabular}
    }
    \caption{Ablation of stochastic noise on AFHQ and FFHQ images.}
    \label{tab:abl_noise}
\vspace{-5mm}
\end{table}

\noindent\textbf{Comparison results.}
Quantitative and qualitative comparison results are shown in Table \ref{tab:quantitative} and Figure \ref{fig:qual_div2k}. Results show that FlowDPS outperforms existing methods across all datasets for various linear inverse problems. In particular, inverse problem solvers designed for latent diffusion-based models (\textit{i.e.}, ReSample, LatentDAPS, PSLD) perform suboptimally when applied to the linear flow-based Stable Diffusion 3.0.
These methods yield images that partially resolve the inverse problem, but their outputs remain significantly degraded---an anticipated challenge when reconstructing highly degraded, high-resolution (768×768) images. Thus, blurry results cause pixel-fidelity results (\textit{e.g.}, SSIM) of baseline methods to be abnormally high in some cases as reported in \cite{ledig2017photo, zhang2018unreasonable}, but substandard perceptual results (\textit{e.g.}, FID, LPIPS) signify the suboptimal performance. Consequently, inverse problem solvers devised for LDMs cannot be naively applied in flow-based settings, supporting the pressing need for a suitable framework.

In contrast to LDM-based solvers, methods specially developed for linear flow-based models (\textit{i.e.}, FlowChef, FlowDPS) yield significantly improved performance regarding the metrics evaluated in Table \ref{tab:quantitative}. However, qualitative results show how FlowChef fails to achieve acceptable visual quality in challenging degradation scenarios. In contrast, FlowDPS consistently produces state-of-the-art results, regarding both image fidelity and perceptual quality. {More qualitative results, including higher resolutions and FLUX~\cite{flux2024} implementation are provided in Appendix \ref{app:additional}.}

\subsection{Ablation study}
\label{sec:ablation}
To further validate the effectiveness of our method, we perform various ablation studies regarding important selections: interpolation parameter $\gamma$ and amount of stochastic noise $\eta$. Also, we explore the effect of NFE and CFG scale.

\noindent\textbf{Interpolation parameter $\gamma$.}
In FlowDPS, we manipulate the clean image estimation to balance data consistency and sample quality. Specifically, the initial $\hat\vz_{0|t}$ and updated $\hat\vz_{0|t}(\vy)$ is interpolated based on an interpolation scale $\gamma$ such that $\tilde \vz_{0|t} = (1-\gamma)\hat\vz_{0|t} + \gamma \hat\vz_{0|t}(\vy)$. Note that choice of parameter $\gamma$ is flexible, and in Figure \ref{fig:abl_gamma} we demonstrate the sampling trajectories of certain examples. Selecting $\gamma=1$ (\textit{i.e.}, using the updated $\hat\vz_{0|t}(\vy)$ directly) causes deviation of $\tilde \vz_{0|t}$ from the sampling trajectory of the base model, causing reconstructions to become blurry. Selecting $\gamma=1-\sigma_t$ puts emphasis on $\hat\vz_{0|t}$ in the early stages of denoising and $\hat\vz_{0|t}(\vy)$ in the later stages, confusing the model in which sampling trajectory it should take. Since overall structure is determined early on in the denoising process (see Figure \ref{fig:abl_gamma}), it is reasonable to weight $\hat\vz_{0|t}(\vy)$ highly in the initial steps to generate data-consistent structure, and weight $\hat\vz_{0|t}$ highly in the final steps to better produce high-frequency fine details. Thus, in our final algorithm we select $\gamma=\sigma_t$.

\noindent\textbf{Importance of stochastic noise.}
We provide quantitative and qualitative analysis regarding the importance of stochastic noise in Figure \ref{fig:abl_noise} and Table \ref{tab:abl_noise}. Specifically, we observe that using only deterministic noise causes artifacts to appear, and that mixing stochastic noise removes such artifacts while guiding samples towards high-quality results.
%


\noindent\textbf{Relation between NFE and CFG scale $\lambda$.}
NFE and CFG scale $\lambda$ are important factors for generating high-quality results that contain high-frequency details. We find using $\text{NFE}=28$ and $\lambda=2.0$ to be sufficient for producing high-quality results.
Further discussion regarding the relation between NFE and CFG scale is provided in Appendix \ref{app:cfgnfe}.

%
%
%


\vspace*{-0.1cm}
\section{Conclusion}
\vspace*{-0.2cm}
This work addresses the pressing need for a general flow-based inverse problem solver, bridging a gap that LDM-based approaches have yet to fill. In order to draw on the powerful generative capacity of flow-based models, we presented Flow-Driven Posterior Sampling (FlowDPS), which carefully decomposes the flow ODE into clean image and noise components. Through targeted manipulation of each component, FlowDPS seamlessly integrates data consistency and preserves generative quality to yield superior results on various linear inverse problems even for high-resolution images with severe degradations.
More broadly, the general framework we introduce clarifies the process of posterior sampling in flow-based generative models, offering both a theoretical perspective and a practical method. We anticipate that this approach will inspire further work 
across a wide range of applications.

{
    \small
    \bibliographystyle{ieeenat_fullname}
    \bibliography{main}
}

\clearpage
\setcounter{page}{1}
\maketitlesupplementary

\section{Derivation of DDIM form}
\label{sec:ddim}

    The stochastic part of the equation \eqref{eqn:stochastic} is
    \begin{align}
        &C_2(t) \tilde \vx_{1|t}\nonumber\\
        &= C_2(t) (\sqrt{1-\eta_t}\hat\vx_{1|t} + \sqrt{\eta_t}\epsilonb)\nonumber\\
        &= \sqrt{C_2(t)^2-C_2(t)^2\eta_t}\hat\vx_{1|t} + \sqrt{C_2(t)^2\eta_t}\epsilonb.
    \end{align}
    If we set $k_t = C_2(t)\sqrt{\eta_t}$, this is equivalent to
    \begin{align}
        \sqrt{C_2(t)^2-k_t^2}\hat\vx_{1|t} + \sqrt{k_t^2}\epsilonb
    \end{align}
    and the one step update is expressed as
    \begin{align}
        &\vx_{t+dt}\nonumber\\
        &=C_1(t)\hat\vx_{0|t} + \sqrt{C_2(t)^2-k_t^2}\hat\vx_{1|t} + \sqrt{k_t^2}\epsilonb
    \end{align}
    which is in the same form as DDIM \cite{song2020denoising} but with different coefficients $C_1(t)=a_t+\dot a_tdt=a_{t+dt}$ and $C_2(t)=b_t+\dot b_tdt=b_{t+dt}$.

    

\section{Proofs}

\cmean*
\begin{proof}
According to \eqref{eqn:margin_v}, we have $ v_t(\vx_t|\vx_0) = \dot a_t \vx_0 + \dot b_t \vx_1.$
This leads to the representation of $\x_t$ with respect to the conditional flow velocity:
\begin{align}
    \vx_t &= a_t\vx_0 + b_t \vx_1 \notag\\
    &= a_t\vx_0 + b_t\left(\frac{v_t(\vx_t|\vx_0)-\dot a_t \vx_0}{\dot b_t}\right) \notag\\
    &= \left(a_t- \dot a_t\frac{ b_t }{\dot b_t}\right)\x_0 +\frac{b_t}{\dot b_t} v_t(\vx_t|\vx_0)
\end{align}
Thus we have
\begin{align*}
    \mathbb{E}[\vx_0|\vx_t] &= \left(a_t- \dot a_t\frac{ b_t }{\dot b_t}\right)^{-1}\left(\vx_t - \frac{b_t}{\dot b_t}\mathbb{E}[v_t(\vx_t|\vx_0)]\right) \\
    &=\left(a_t- \dot a_t\frac{ b_t }{\dot b_t}\right)^{-1}\left(\vx_t - \frac{b_t}{\dot b_t}v_t(\vx_t)\right)
   \end{align*}
Similarly, we have
\begin{align}
    \vx_t &= a_t\vx_0 + b_t \vx_1 \notag\\
    &= a_t \left(\frac{v_t(\vx_t|\vx_0)-\dot b_t \vx_1}{\dot a_t}\right)   + b_t \vx_1\notag\\
    &= \left(b_t- \dot b_t\frac{ a_t }{\dot a_t}\right)\x_1 +\frac{a_t}{\dot a_t} v_t(\vx_t|\vx_0)
\end{align}
which leads to
\begin{align*}
    \mathbb{E}[\vx_1|\vx_t] &= \left(b_t- \dot b_t\frac{a_t }{\dot a_t}\right)^{-1}\left(\vx_t - \frac{a_t}{\dot a_t}\mathbb{E}[v_t(\vx_t|\vx_0)]\right) \\
    &=  \left(b_t- \dot b_t\frac{a_t }{\dot a_t}\right)^{-1}\left(\vx_t - \frac{a_t}{\dot a_t}v_t(\x_t)\right)
   \end{align*}

\end{proof}

\mcg*
\begin{proof}
    Using \eqref{eqn:margin_v_exp} and \eqref{eqn:v_score}, we can express the Jacobian as
    \begin{align}
        J_\theta(\vx) = \frac{1}{a_t}\frac{\partial (\vx_t - b_t^2 \nabla_{\vx_t} \log p(\vx_t))}{\partial \vx_t}.
    \end{align}
    Here, we will derive the score function $\nabla_{\vx_t} \log p(\vx_t)$ in a closed form solution with assumptions and prove the result.
    
    From the definition of affine conditional flow,
    \begin{align}
        \vx_t = \psi_t(\vx_1|\vx_0) = a_t \vx_0 + b_t \vx_1,
    \end{align}
    where $\vx_1 \sim \mathcal{N}(0, \rmI_d)$,
    we get the explicit expression of
    \begin{align}
        p(\vx_t|\vx_0) = \frac{1}{(2\pi b_t^2)^{d/2}} \exp \left(-\frac{\|\vx_t - a_t\vx_0\|^2}{2b_t^2}\right).
    \end{align}
    Assume that the clean images are distribution on subspace $\mathcal{M}$ uniformly. To express this, we start from defining $p(\vx_0)$ as a zero-mean Gaussian distribution with isotropic variance $\sigma$.
    \begin{align}
        p(\vx_0) = \frac{1}{(2\pi \sigma^2)^{l/2}} \exp \left(-\frac{\|\mathcal{P}_\mathcal{M} \vx_0 \|^2}{2\sigma^2}\right)
        \label{eqn:uniform}
    \end{align}
    as $\mathcal{P}_{M}\vx_0 = \vx_0$. 
    Considering the marginal density
    \begin{align}
        p(\vx_t) = \int p(\vx_t|\vx_0)p(\vx_0) d\vx_0,
    \end{align}
    we have to compute $p(\vx_t|\vx_0)p(\vx_0)$ that is
    \begin{align}
        p(\vx_t|\vx_0)p(\vx_0) = \frac{1}{(2\pi b_t^2)^{d/2}(2\pi \sigma^2)^{l/2}} \exp (-d(\vx_t, \vx_0)),
    \end{align}
    where
    \begin{align}
        &d(\vx_t, \vx_0) = \frac{\|\vx_t - a_t \vx_0\|^2}{2b_t^2} + \frac{\| \mathcal{P}_\mathcal{M}\vx_0 \|^2}{2\sigma^2}\nonumber\\
        &= \frac{\|\mathcal{P}_\mathcal{M}^\perp \vx_t\|^2}{2b_t^2} + \frac{\|\mathcal{P}_\mathcal{M}\vx_t - a_t \vx_0\|^2}{2b_t^2} + \frac{\| \mathcal{P}_\mathcal{M}\vx_0 \|^2}{2\sigma^2}\nonumber \\
        &= \frac{\|\mathcal{P}_\mathcal{M}^\perp\vx_t\|^2 - c_t \|\mathcal{P}_\mathcal{M}\vx_t\|^2}{2b_t^2} + \frac{\| \mathcal{P}_\mathcal{M}\vx_0 - \frac{1-c_t}{a_t}\mathcal{P}_\mathcal{M}\vx_t \|^2}{\sigma^2c_t},\nonumber 
    \end{align}
    and
    \begin{align}
        c_t = \frac{b_t^2}{b_t^2 + \sigma^2+a_t^2}.
    \end{align}
    Therefore, after integrating out with respect to $\vx_0$, we have
    \begin{align}
        \log p(\vx_t) = -\frac{\|\mathcal{P}_\mathcal{M}^\perp\vx_t\|^2 - c_t \|\mathcal{P}_\mathcal{M}\vx_t\|^2}{2b_t^2} + const.,
    \end{align}
    leading to 
    \begin{align}
        \nabla_{\vx_t} \log p(\vx_t) = -\frac{\mathcal{P}_\mathcal{M}^\perp\vx_t - c_t\mathcal{P}_\mathcal{M}\vx_t}{b_t^2}.
    \end{align}
    By the assumption of uniform distribution in \eqref{eqn:uniform} with $\sigma \rightarrow \infty$, we have $c_t \rightarrow 0$. Therefore,
    \begin{align}
        \lim_{\sigma \rightarrow \infty} \nabla_{\vx_t} \log p(\vx_t) = -\frac{1}{b_t^2}\mathcal{P}_\mathcal{M}^\perp \vx_t
    \end{align}
    and we conclude that
    \begin{align}
        J_\theta(\vx_t) &= \frac{1}{a_t}\frac{\partial (\vx_t - b_t^2\nabla_{\vx_t}\log p(\vx_t))}{\partial \vx_t}\\
        &= \frac{1}{a_t}\frac{\partial (\vx_t - \mathcal{P}_\mathcal{M}^\perp \vx_t)}{\partial \vx_t}\\
        &= \frac{1}{a_t}\frac{\partial \mathcal{P}_\mathcal{M} \vx_t}{\partial \vx_t} = \frac{1}{a_t} \mathcal{P}_\mathcal{M}.
    \end{align}
\end{proof}

\section{Implementation details}
In this section, we provide implementation details of the FlowDPS and baselines. The code will be released to public at \url{https://github.com/FlowDPS-Inverse/FlowDPS}.

\paragraph{Implementation with flow models} For a fair comparison, we re-implement baselines that are proposed with score-based diffusion models. In the following, we provide details for each implementation. 

\noindent
\textbf{PSLD~\cite{rout2024solving}} {extends DPS~\cite{chung2023diffusion} to latent diffusion models by introducing a novel regularization loss for the autoencoder. Since its gradient is computed with respect to the intermediate sample $\vx_t$ during the reverse diffusion process, the same algorithm can be implemented using Euler’s method without loss of generality.}

\noindent
\textbf{ReSample}
{incorporates data consistency into the reverse sampling process of LDMs by solving an optimization problem on some time steps. The key ideas of this framework is Stochastic Resampling, for renoising the optimized latent, and hard data consistency. We implement Resample for linear flow-based models by setting $\bar\alpha_t=\frac{(1-\sigma_t)^2}{\sigma_t^2+(1-\sigma_t)^2}$. Resample uses various techniques such as dividing the sampling process into three stages and separately using soft data consistency and hard data consistency. We find that adopting this same setting for our comparison produces extremely poor results, primarily because of the small number of ODE steps of the Euler solver. We empirically find that using skip step size of 1 to perform hard data consistency on all steps produces best results, and use this setting for comparison.
}

\noindent
\textbf{LatentDAPS}
{proposes a noise annealing process to decouple consecutive samples in a sampling trajectory, which enables solvers to create errors made in earlier steps. We implement LatentDAPS from the official code, with modification of the solving process to use the linear flow-based backbone model StableDiffusion 3.0 \cite{esser2024scaling} and Euler solver.}

\paragraph{Hyper-parameter setting}
For all implementations, we use StableDiffusion 3.0 \cite{esser2024scaling} as our baseline model. Also, we set the shift factor of time scheduler to 4.0.

\begin{itemize}
    \item \textbf{PSLD} We set $\eta=1.0$ and $\gamma=0.1$ by following the original paper setting, and use 200 NFEs as in \cite{kim2023regularization}.
 { \item \textbf{Resample} We use the same resampling hyperparameter $\gamma \left(\frac{1-\bar\alpha_{t-1}}{\bar\alpha_t}\right)\left(1-\frac{\bar\alpha_t}{\bar\alpha_{t-1}}\right)$ with $\gamma=40$ as proposed in the original paper, reparameterizing $\bar\alpha_t$ as previously explained. Skip step size is set to 1 to perform sufficient hard data consistency steps.}
    \item \textbf{LatentDAPS} We use ODE solver steps $N_{ODE}=5$ and annealing scheduler $N_{A}=28$, resulting in a total NFE of 120. Total step number $N$ in Langevin Dynamics is set to 50, following the settings of the original paper \cite{zhang2024improving}.
    \item \textbf{FlowChef} For a constant step size for the FlowChef, we find the best configuration by grid search with 100 images. In consequence, we set the step size to 200 for the super-resolution tasks and 50 for the deblurring tasks. 
    \item \textbf{FlowDPS} For data consistency optimization, we use 3 steps of gradient descent with step size 15 for all tasks.
\end{itemize}

\section{Ablation Study}
\subsection{Analysis on $\beta_t$ and $\gamma_t$}
\label{app:beta}
{
In FlowDPS, we interpolate $\hat \vz_{0|t}$ and $\hat \vz_{0|t}(\vy)$ with coefficient $\gamma_t$ to ensure the data consistency update does not lead to excessive divergence for the flow model's trajectory. For the selection of $\gamma_t=\sigma_t$, we refer to the progression of the adaptive step size $\beta_t$ in \eqref{eqn:tildetweedie}.
Specifically, our likelihood gradient is applied to clean estimation as
\begin{align}
    \hat\vx_{0|t}(\vy) = \hat\vx_{0|t} - \beta_t \nabla_{\hat\vx_{0|t}} \log p(\vy|\hat\vx_{0|t})\notag
\end{align}
where $\beta_t=\frac{\zeta_t}{a_t}\frac{dt}{C_1(t)}$.
For a linear flow with $\sigma_t=t$,
\begin{align}
     \frac{\zeta_t}{a_t} = \dot a_t \frac{b_t}{a_t}\left(1-\frac{b_t}{a_t}\right)=\frac{-\sigma_t}{1-\sigma_t}\left(1-\frac{\sigma_t}{1-\sigma_t}\right)
\end{align}
and $\beta_t$ is expressed as
\begin{align}
    \beta_t = \frac{dt \sigma_t (2\sigma_t -1)}{(1-\sigma_t)^2(1-\sigma_{t+dt})}
\end{align}
with $dt=\sigma_{t+dt}-\sigma_t<0$.
Figure~\ref{fig:beta} illustrates the progress of $-\beta_t$ during the sampling process. As we mentioned in the main paper, it rapidly decreases to zero which leads to higher stepsize for likelihood gradient in the early stage.
In our method, we imitate the theory-driven behavior of step size for likelihood gradient by introducing interpolation coefficient $\gamma$ that emphasizes data consistency (i.e. likelihood gradient) in the early stage.
\begin{figure}[!h]
    \centering
    \includegraphics[width=.8\linewidth]{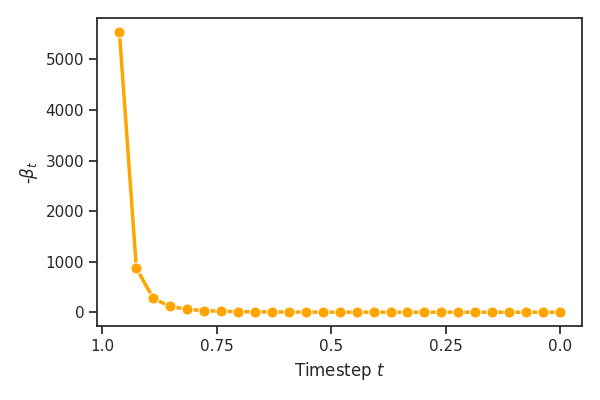}
    \vspace{-5mm}
    \caption{Evolution of $-\beta_t$ during sampling.}
    \label{fig:beta}
\end{figure}
}

\subsection{CFG scale and NFE}
\label{app:cfgnfe}
CFG scale $\lambda$ is an important factor for generating high-frequency details based on provided text prompts. When using low $\lambda$, the model fails to capture fine details, producing blurry results. Using high $\lambda$ enables the model to better generate fine details, but can also guide sample generation in a completely wrong direction when excessively high.
Figure \ref{fig:abl_cfg} in the Appendix shows how when using higher NFEs, the model requires more guidance for production of high-frequency details, causing higher values of $\lambda$ to perform better.

\begin{figure}[h]
    \centering
    \includegraphics[width=\linewidth]{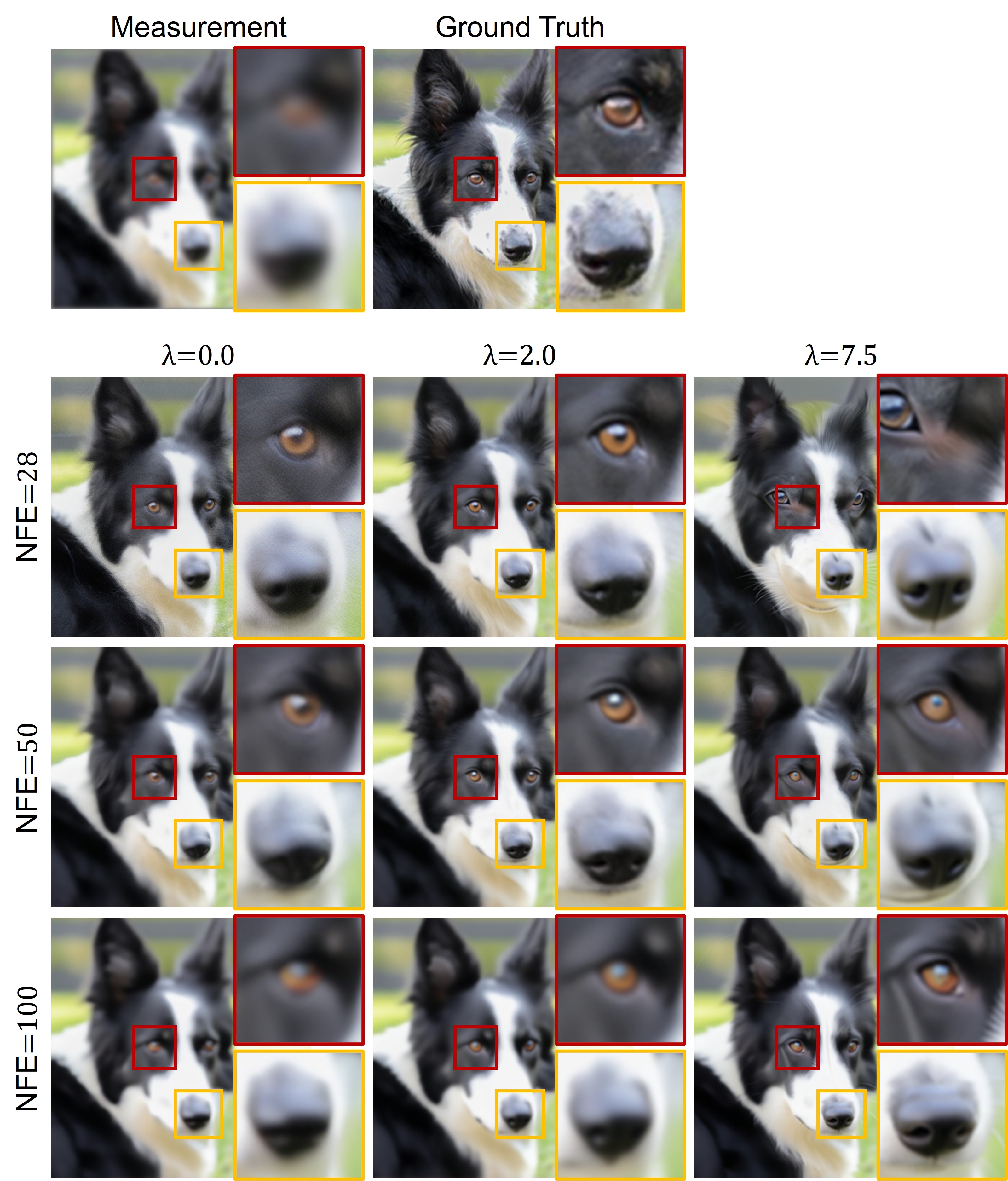}
    \caption{Relation between NFE and CFG scale $\lambda$. For varying NFEs (28, 50, 100), we study the impact of $\lambda$ (0.0, 2.0, 7.5).}
    \label{fig:abl_cfg}
    \vspace{-5mm}
\end{figure}

\section{Runtime Comparison}
Among baselines, PSLD takes the longest time as it requires computation of the Jacobian in terms of the transformer denoiser. In contrast, ReSample, FlowChef, and FlowDPS do not perform this computation due to approximation of the Jacobian, making them relatively efficient. Theoretically, the main computational bottleneck for these methods is backpropagation through the decoder and all other computations are relatively small, giving nearly identical runtime. Although LatentDAPS does not require backpropagation through a neural network, repeatedly solving the flow ODE and performing Langevin dynamics multiple times to achieve sufficient performance induces longer runtime compared to FlowDPS.

\section{Additional Results}
\label{app:additional}

\begin{figure*}[!t]
    \centering
    \includegraphics[width=0.95\linewidth]{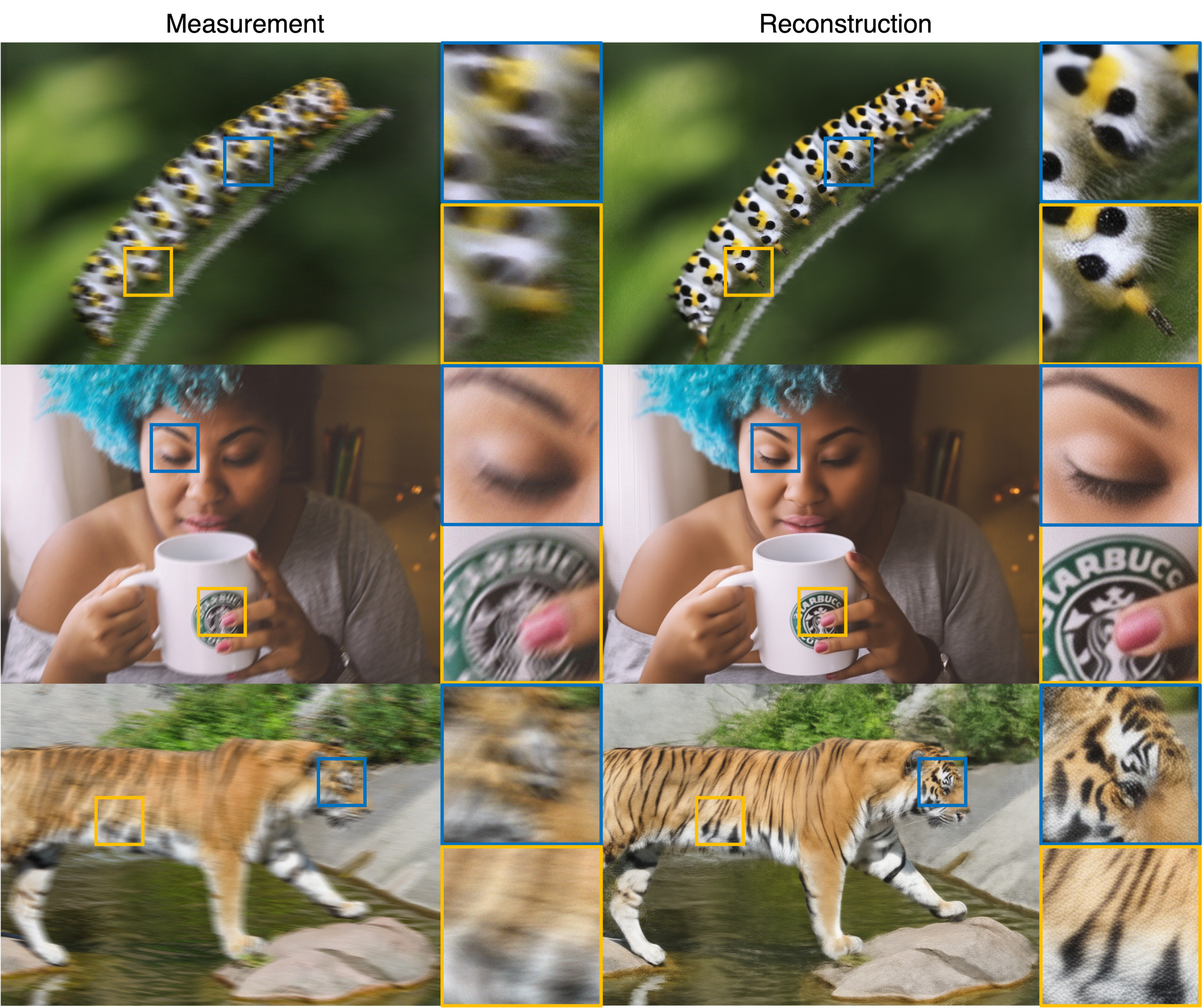}
    \caption{Motion deblur results by FlowDPS with SD3.0 for DIV2K images (1024x1408)}
    \vspace{-5mm}
    \label{fig:high-motion}
\end{figure*}

\subsection{Higher pixel resolution}

The diffusion process defined in latent space enables control over the generation process through various conditions, such as text, while also improving computational efficiency. Specifically, latent flow models like Stable Diffusion 3.0 can generate images with resolutions exceeding 1K pixels and support various aspect ratios beyond standard square formats.
We leverage this advantage of latent flow models to solve inverse problems, demonstrating the practical applicability of FlowDPS in real-world scenarios.
In Figure~\ref{fig:high-motion}, we address the motion blur problem using 1024×1408 images, obtained by cropping the central region of the DIV2K training set. For the text guidance, we use text description of the image extracted by LLaVA~\cite{NEURIPS2023_6dcf277e}. As demonstrated in the main experiment, FlowDPS successfully solves inverse problems defined in higher pixel resolutions without loss of generality.

\subsection{Inverse problem solving with FLUX}

The FlowDPS framework is designed for general affine conditional flows, enabling the construction of an inverse problem solver with various flow models.
In our main experiment, we demonstrate the efficiency of FlowDPS using a pre-trained linear conditional flow model, Stable Diffusion 3. To further illustrate the generality of FlowDPS, we also incorporate another open-source linear conditional flow model, FLUX.
Specifically, we set $\eta_t=0$ and use 10 iterations for data consistency optimization. The step size is set to 7.0 for super-resolution and 12.0 for deblurring, while all other settings remain consistent with Stable Diffusion 3.
Figure~\ref{fig:flux_sr} and \ref{fig:flux_blur} present the reconstruction results of FlowDPS with FLUX 1.0-schnell~\cite{flux2024} on DIV2K images. As in the main experiment, we utilize text prompts extracted from measurements using DAPE~\cite{wu2024seesr}. Regardless of the backbone model, FlowDPS effectively solves inverse problems.


\subsection{Additional Qualitative comparisons}
In this section, we further provide qualitative comparisons for the four inverse problems across the AFHQ and FFHQ datasets. Figures~\ref{fig:afhq_sravgpool}-\ref{fig:ffhq_mtblur} show that FlowDPS consistently achieves promising reconstruction results and outperforms all the baselines. 

\begin{figure*}[!t]
    \centering
    \includegraphics[width=\linewidth]{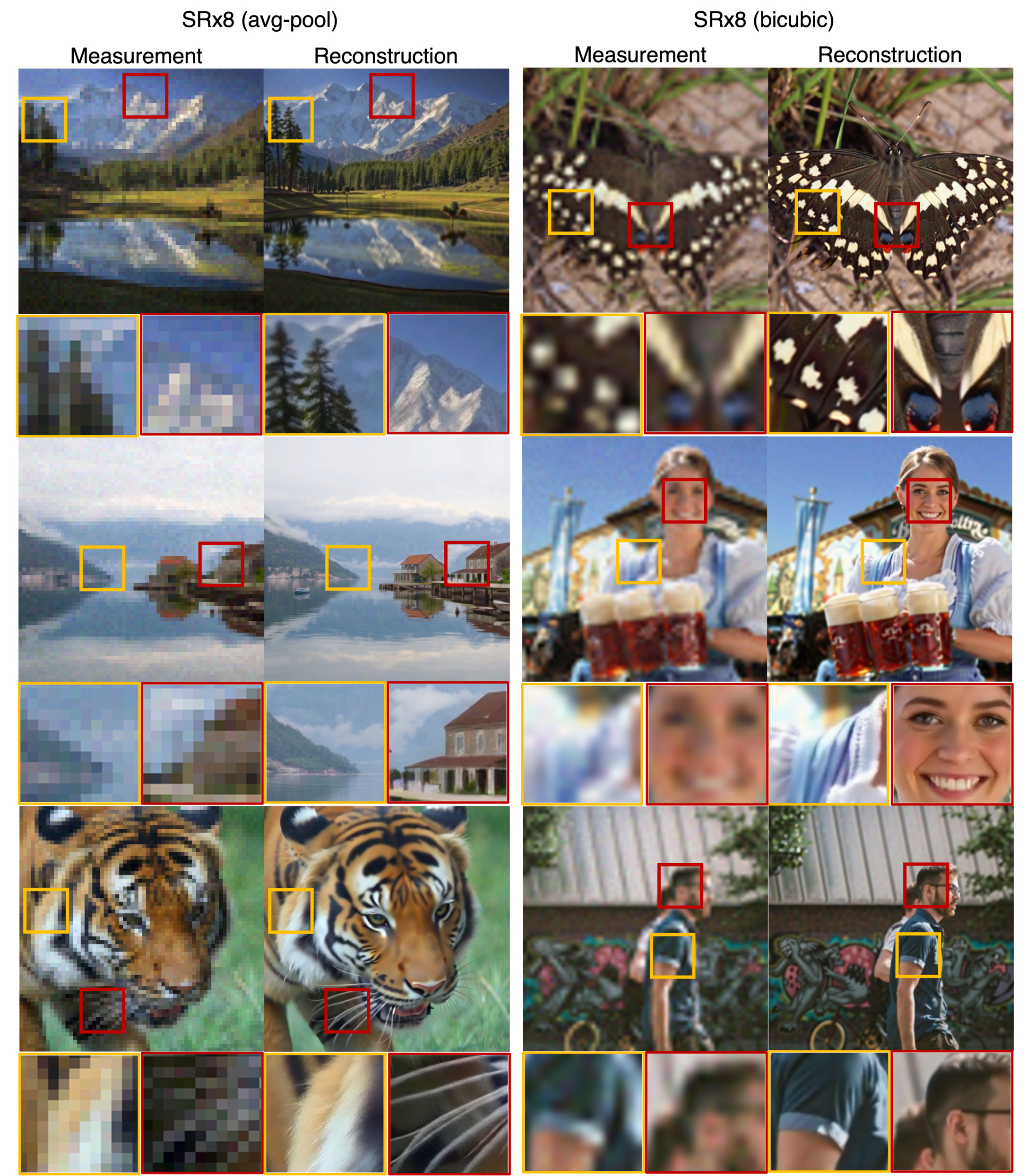}
    \caption{Super-resolution (x8) results by FlowDPS with Flux for DIV2K images (512x512)}
    \label{fig:flux_sr}
\end{figure*}
\begin{figure*}[!t]
    \centering
    \includegraphics[width=\linewidth]{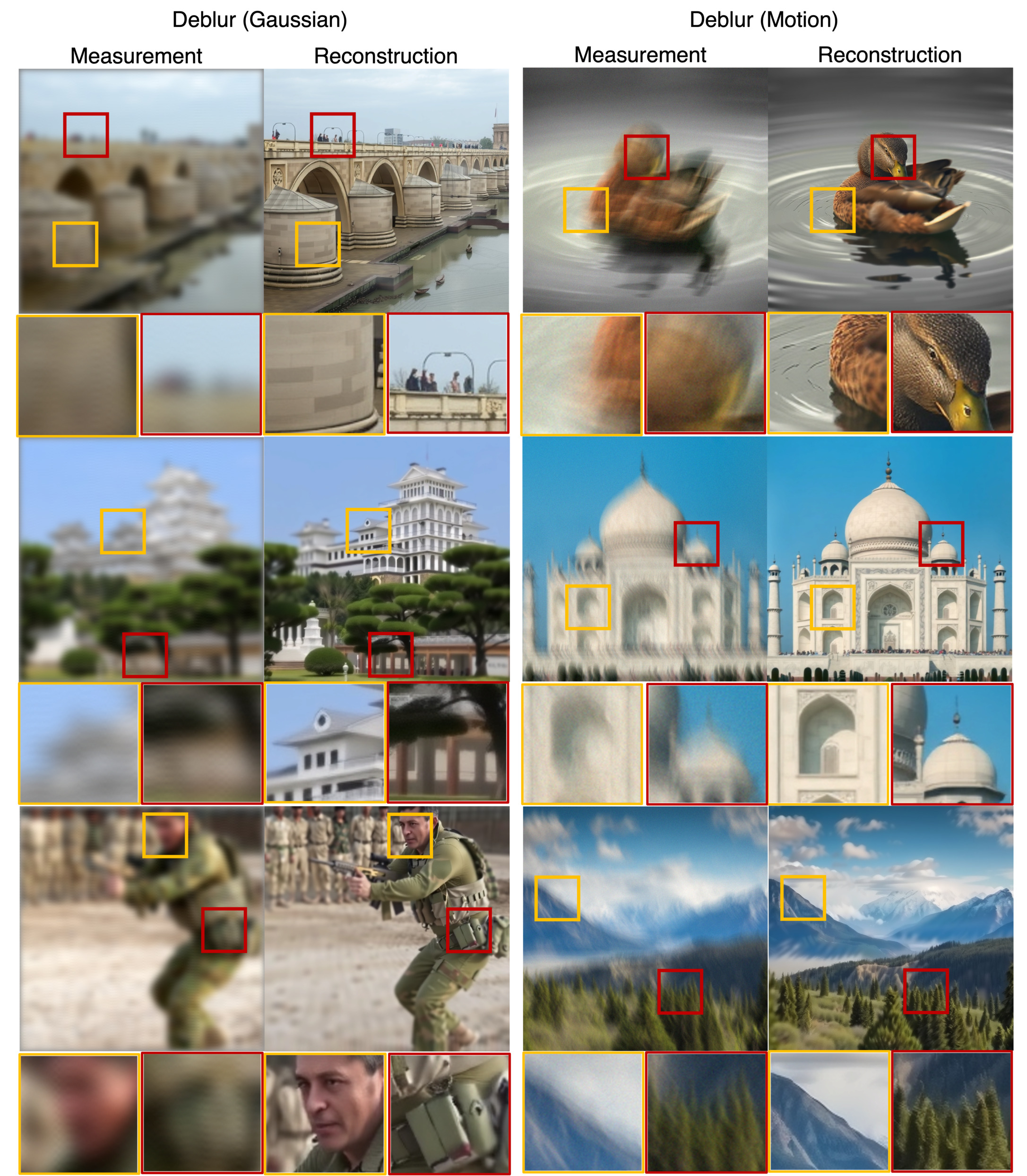}
    \caption{Deblurring results by FlowDPS with Flux for DIV2K images (512x512)}
    \label{fig:flux_blur}
\end{figure*}

\begin{figure*}[!t]
    \centering
    \includegraphics[width=.9\linewidth]{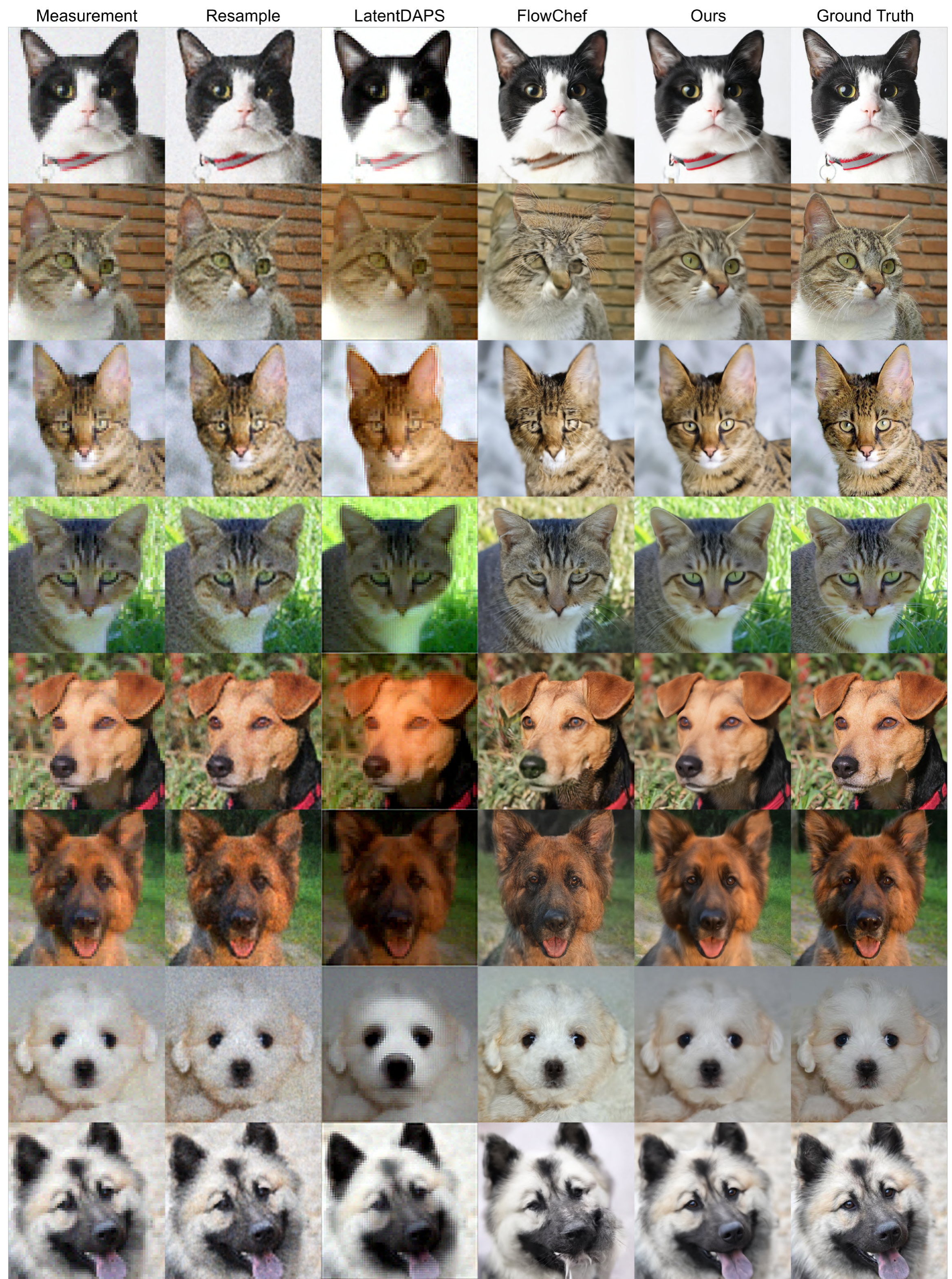}
    \caption{Qualitative comparison for x12 super-resolution from average pooling on the AFHQ dataset.}
    \label{fig:afhq_sravgpool}
\end{figure*}

\begin{figure*}[!t]
    \centering
    \includegraphics[width=.9\linewidth]{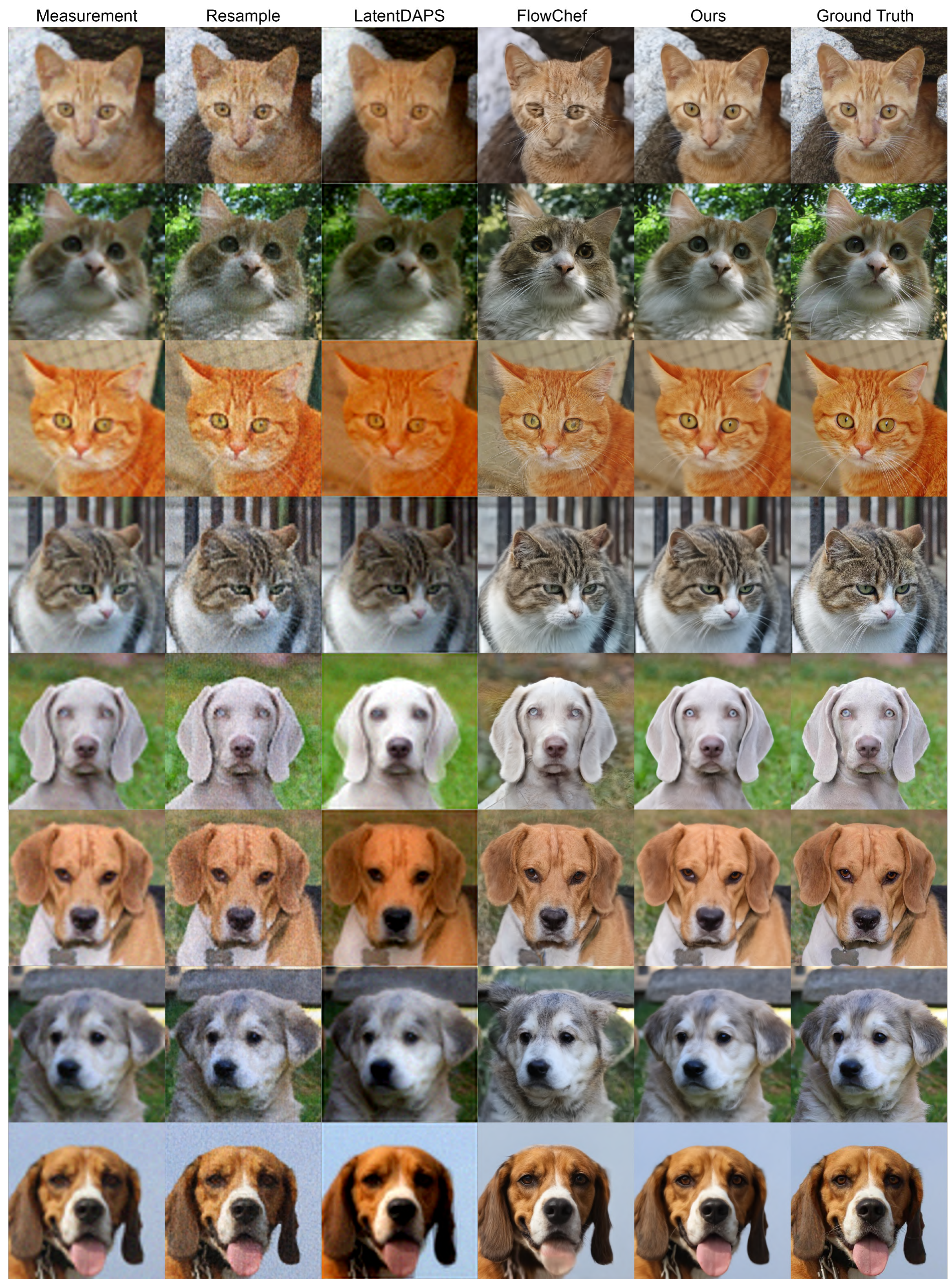}
    \caption{Qualitative comparison for x12 super-resolution from bicubic downsampling on the AFHQ dataset.}
    \label{fig:afhq_sravgbicubic}
\end{figure*}

\begin{figure*}[!t]
    \centering
    \includegraphics[width=.9\linewidth]{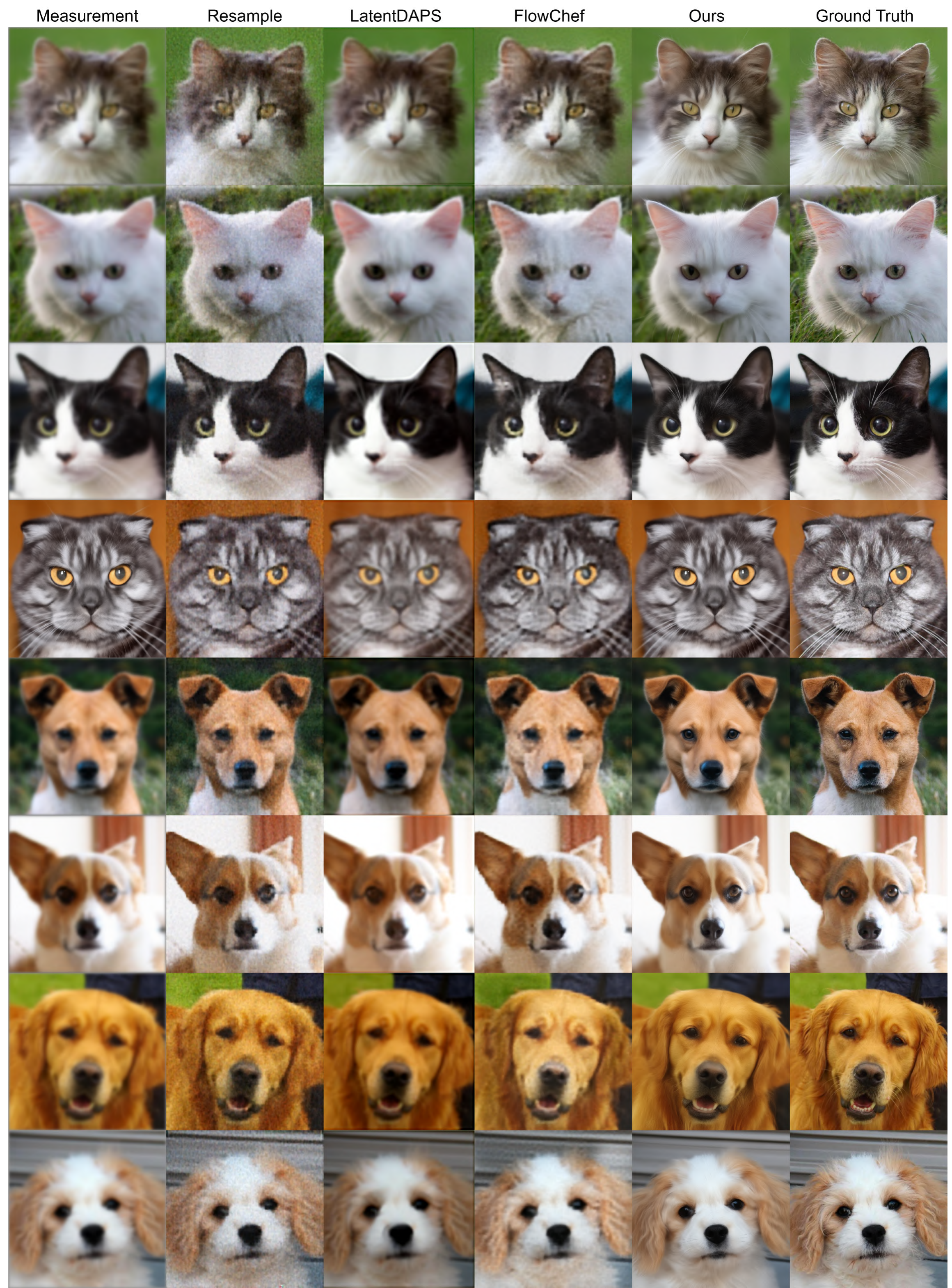}
    \caption{Qualitative comparison for Gaussian deblurring on the AFHQ dataset.}
    \label{fig:afhq_gsblur}
\end{figure*}

\begin{figure*}[!t]
    \centering
    \includegraphics[width=.9\linewidth]{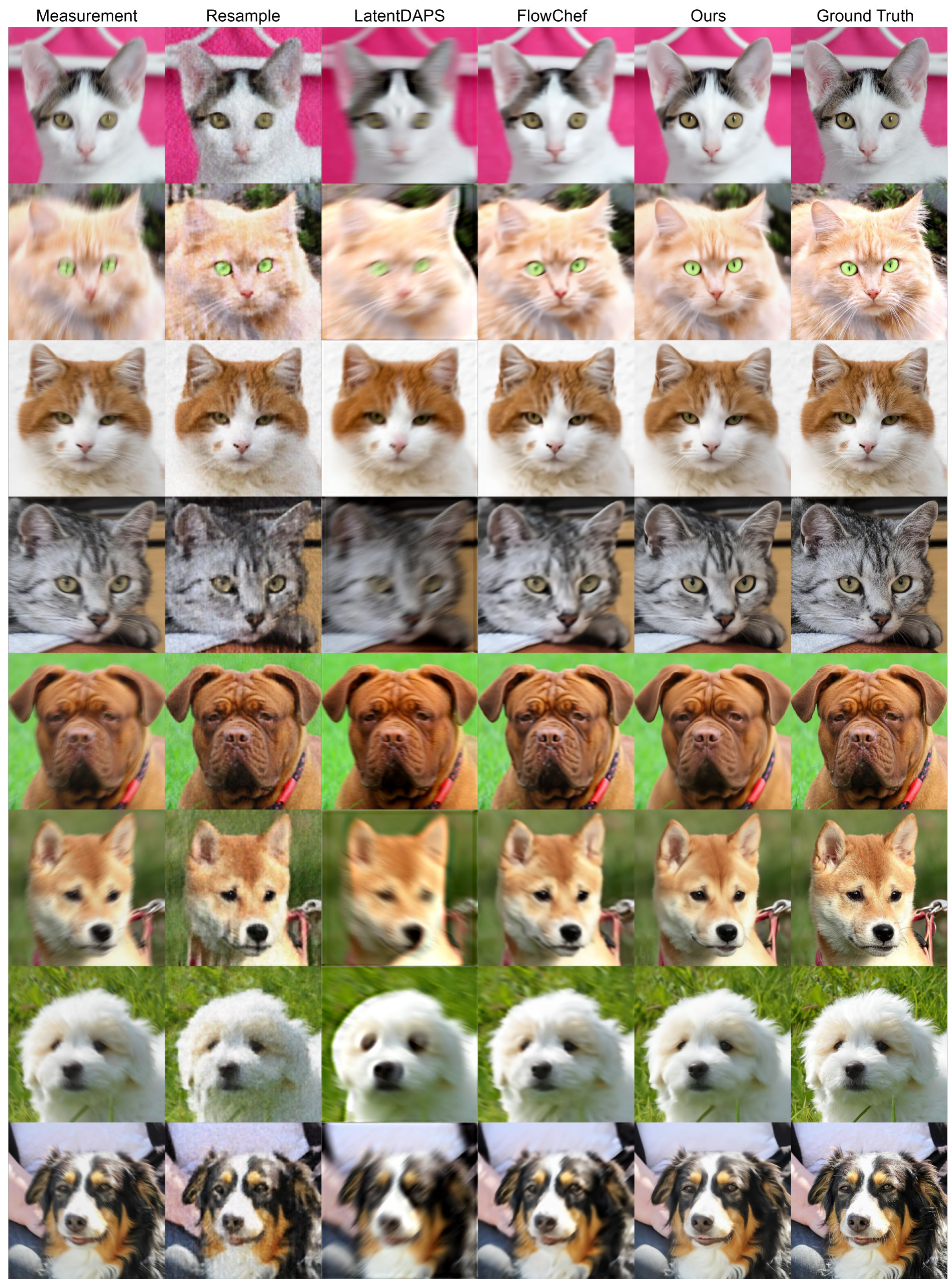}
    \caption{Qualitative comparison for motion deblurring on the AFHQ dataset.}
    \label{fig:afhq_mtblur}
\end{figure*}

\begin{figure*}[!t]
    \centering
    \includegraphics[width=.9\linewidth]{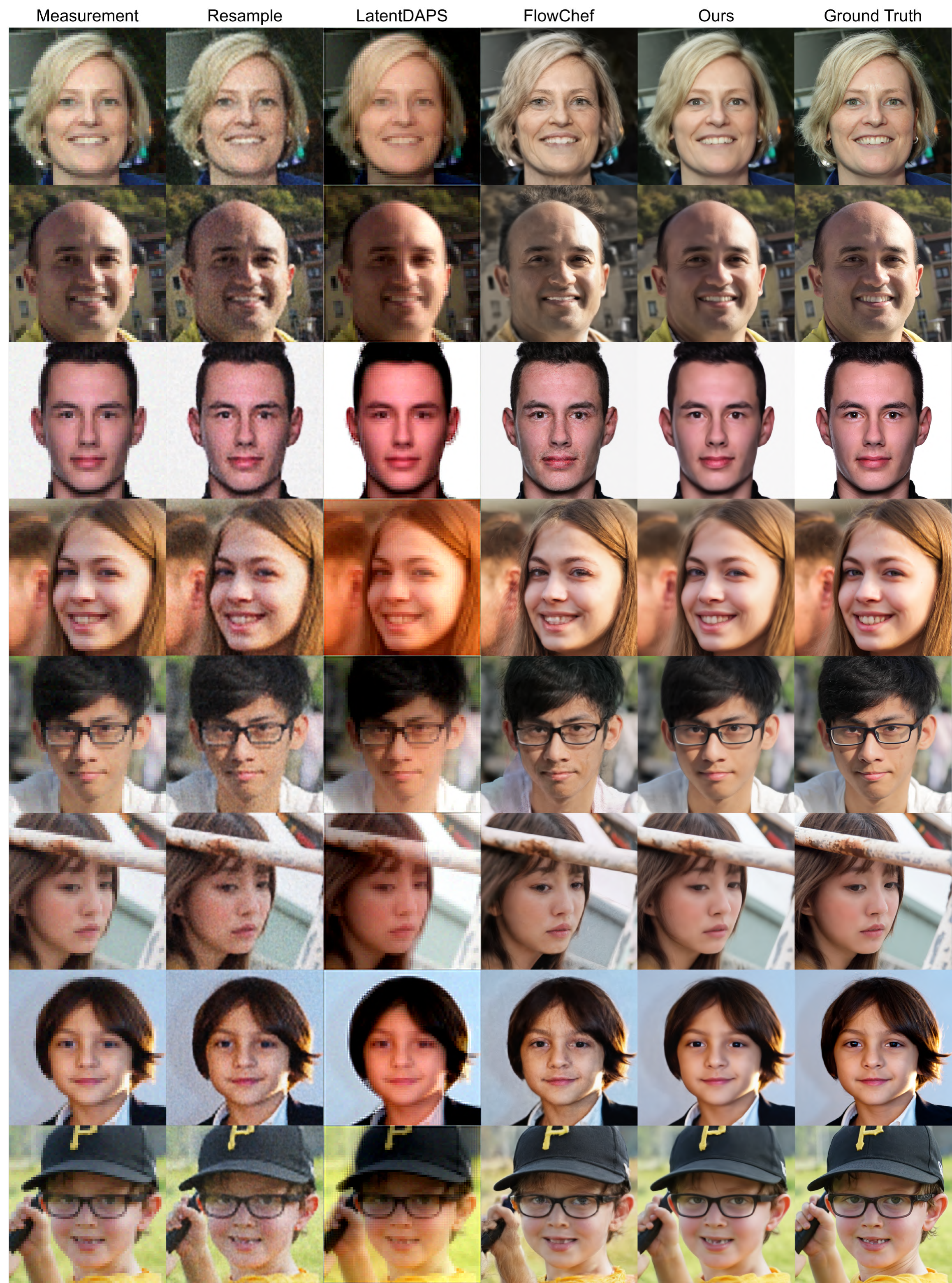}
    \caption{Qualitative comparison for x12 super-resolution from average pooling on the FFHQ dataset.}
    \label{fig:ffhq_sravgpool}
\end{figure*}

\begin{figure*}[!t]
    \centering
    \includegraphics[width=.9\linewidth]{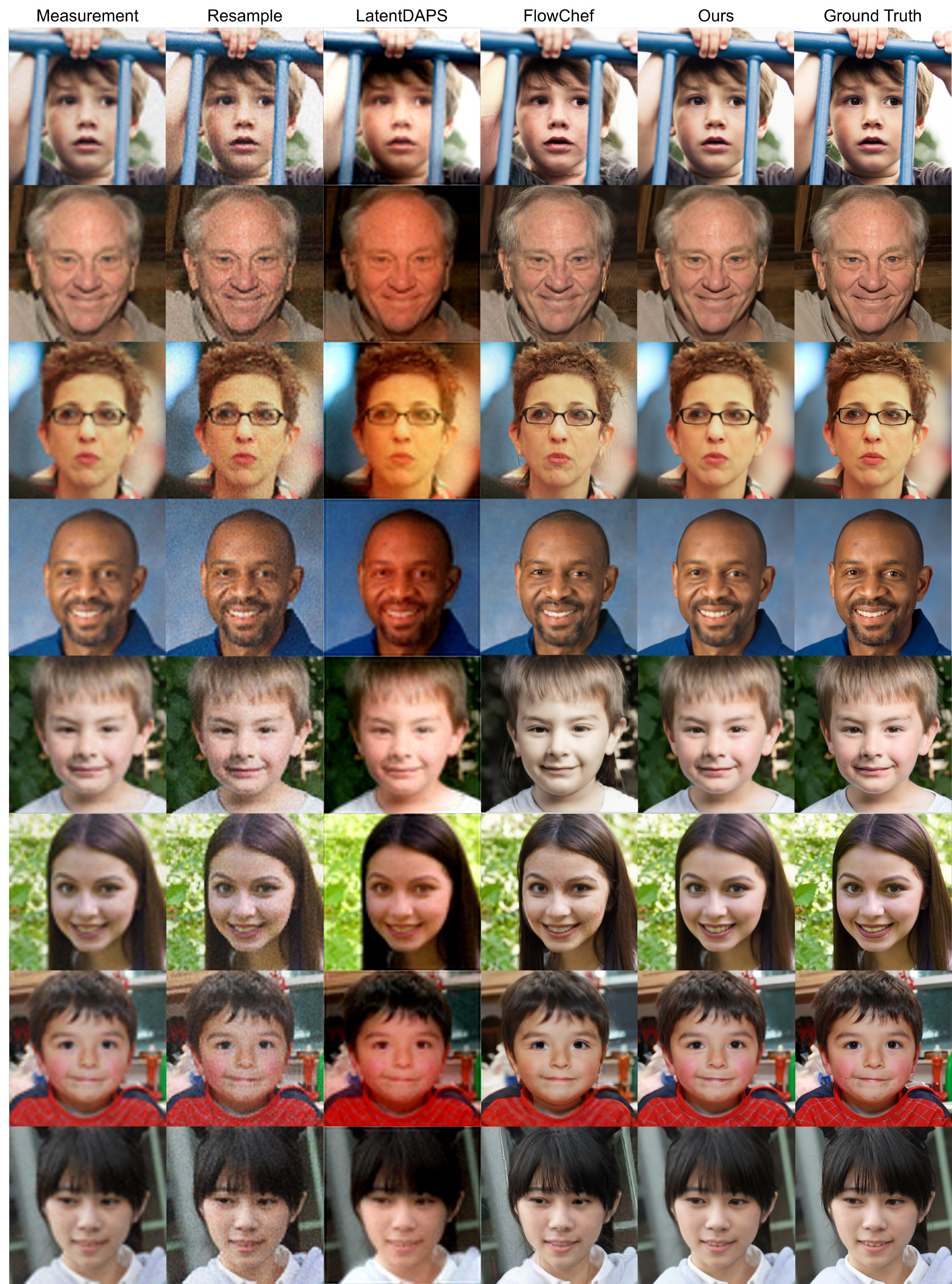}
    \caption{Qualitative comparison for x12 super-resolution from bicubic downsampling on the FFHQ dataset.}
    \label{fig:ffhq_sravgbicubic}
\end{figure*}

\begin{figure*}[!t]
    \centering
    \includegraphics[width=.9\linewidth]{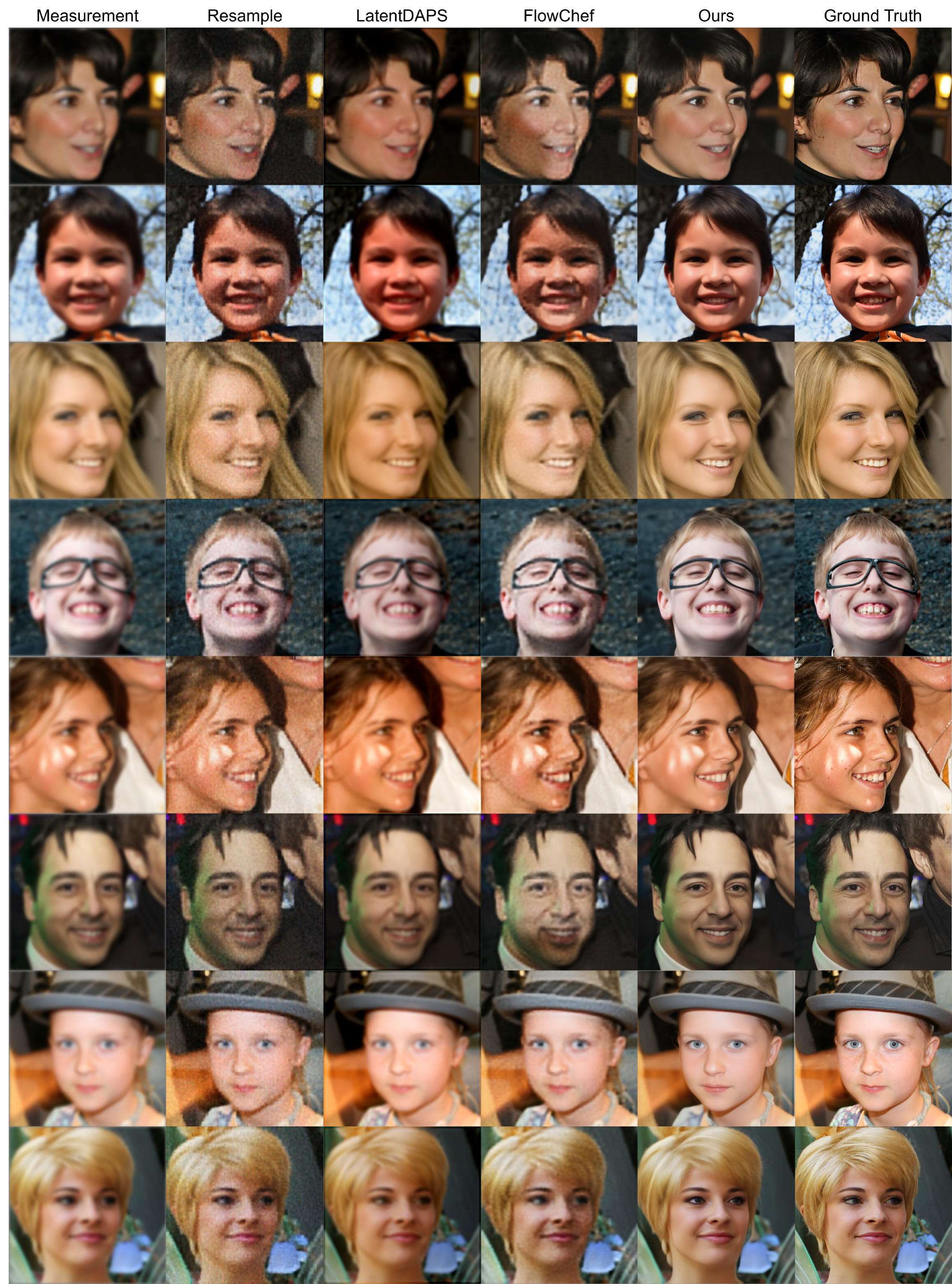}
    \caption{Qualitative comparison for Gaussian deblurring on the FFHQ dataset.}
    \label{fig:ffhq_gsblur}
\end{figure*}

\begin{figure*}[!t]
    \centering
    \includegraphics[width=.9\linewidth]{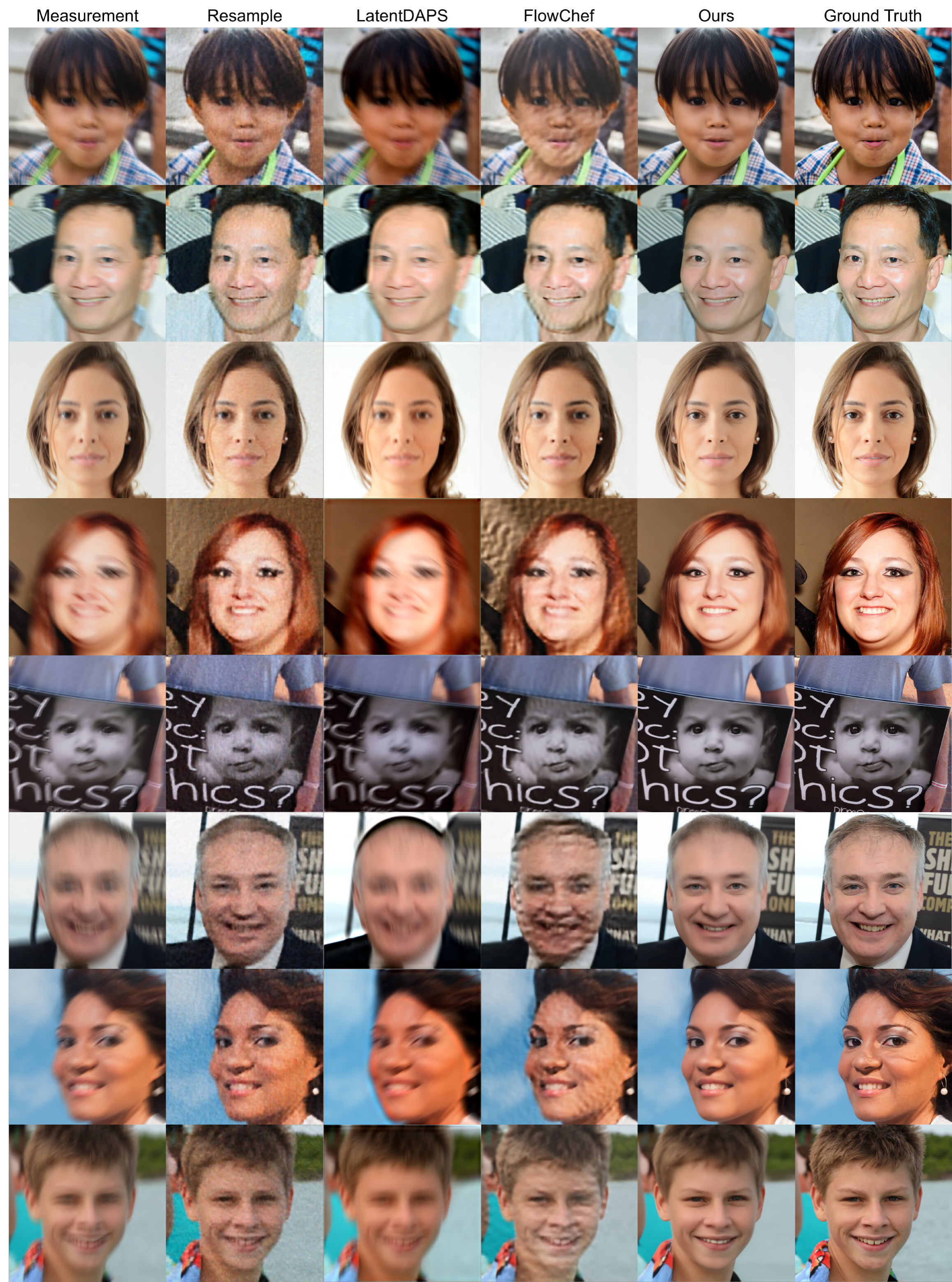}
    \caption{Qualitative comparison for motion deblurring on the FFHQ dataset.}
    \label{fig:ffhq_mtblur}
\end{figure*}


\end{document}